\newcommand{\tc}[2]{\textcolor{#1}{#2}}
\newcommand{\ones}{\mathbf 1}
\newcommand{\reals}{{\mathbb{R}}}
\newcommand{\naturals}{{\mathbb{N}}}
\newcommand{\argmin}{\mathop{\rm argmin}}
\newcommand{\argmax}{\mathop{\rm argmax}}
\newcommand{\norm}[1]{\left\lVert#1\right\rVert}
\newcommand{\mnorm}[1]{{\left\vert\kern-0.25ex\left\vert\kern-0.25ex\left\vert #1 
    \right\vert\kern-0.25ex\right\vert\kern-0.25ex\right\vert}}
\newcommand{\mc}{\mathcal}
\newcommand{\change}[1]{\tc{black}{#1}}
\begin{document}

\begin{frontmatter}

\title{Set-based Value Operators for Non-stationary and Uncertain Markov Decision Processes\thanksref{footnoteinfo}}

\thanks[footnoteinfo]{This research is partly funded by NSF grant CMMI-210563 and the University of Washington Aero\&Astro Condit fellowship. Corresponding author Sarah H.Q. Li. Email. sarahli@control.ee.ethz.ch.}

\author[UW]{Sarah H.Q. Li},  
\author[UP]{Assal\'{e} Adj\'{e}}, 
\author[ENAC]{Pierre-Lo\"{i}c Garoche},
\author[UW]{Beh\c{c}et A\c{c}ikme\c{s}e}

\address[UW]{Department of Aeronautics and Astronautics, University of Washington, Seattle, USA. (e-mail:\{sarahli, behcet\}@uw.edu).}  
\address[UP]{LAMPS, Universit\'{e} de Perpignan Via Domitia, Perpignan, France. (e-mail: assale.adje@univ-perp.fr).}             
\address[ENAC]{\'{E}cole Nationale de l'Aviation Civile, Universit\'{e} de Toulouse, Toulouse, France. (e-mail: Pierre-Loic.Garoche@enac.fr).}        

\begin{keyword}                          
Markov decision process, contraction operator, stochastic control,  decision making and autonomy
\end{keyword}

\begin{abstract}This paper analyzes finite state Markov Decision Processes (MDPs) with
uncertain parameters in compact sets and re-examines results from
robust MDP via set-based fixed point theory. 
To this end, we generalize the Bellman and policy evaluation operators
to contracting operators on the value function space and denote
them as \emph{value operators}. We lift these value operators to
act on \emph{sets} of value functions and denote them as \emph{set-based value operators}. We prove that the set-based value
operators are \emph{contractions} in the space of compact value function sets.
Leveraging insights from set theory, we generalize the rectangularity
condition in classic robust MDP literature
to a containment condition for all value operators, which
is weaker and can be applied to a larger set of parameter-uncertain
MDPs and contracting operators in dynamic programming. We prove that both the rectangularity condition and the containment condition sufficiently ensure that the set-based value operator's fixed point set contains its own extrema elements. For convex and compact sets of uncertain MDP parameters, we show equivalence between the classic robust value function and the supremum of the fixed point set of the set-based Bellman operator. Under dynamically changing MDP parameters in compact sets, we prove a set convergence result for value iteration, which otherwise may not converge to a single value function. Finally, we derive novel guarantees for probabilistic path planning problems in planet exploration and stratospheric station-keeping.
\end{abstract}
\end{frontmatter}

\section{Introduction}
Markov decision process (MDP) is a versatile model for decision making in stochastic environments and is widely used in trajectory planning~\cite{al2013wind}, robotics~\cite{van2015learning}, and operations research~\cite{doshi2005dynamic}. Given state-action costs and transition probabilities, finding an optimal policy of the MDP is equivalent to solving for the fixed point \emph{value function} of the corresponding Bellman operator. 

Many application settings of MDPs, including traffic light control, motion planning, and dexterous manipulation, deal with \emph{environmental non-stationarity}---dynamically changing MDP cost and transition probabilities due to external factors or the presence of interfering decision makers. This environmental non-stationarity corresponds to uncertainty in the MDP transition and cost parameters and differs from an MDP's internal stochasticity, which \change{is modeled by stationary stochastic dynamics} 
whose probability distributions do not change over time. 
\change{\begin{exmp}[Navigating in changing wind]\label{ex:wind_uncertainty} An autonomous aircraft navigates in a two-dimensional and time-varying wind field towards a non-stationary target, where the wind field varies between $N$ major patterns over time. The aircraft's transition probabilities are constructed from global averages of local wind observations, and the aircraft's objective is to reach the location of the non-stationary target, which is also affected by wind. 
If the wind pattern strictly switches between the discrete wind trends, then the transition uncertainty at state $s\in[S]$ is given by the set $\mc{P}_s = \{P^1_s, \ldots, P^N_s\}$. Similarly, the reachability cost of each state-action is also given by $\mc{C}_s = \{C^1_s,\ldots, C^N_s\}$. Collectively, we say that the MDP has time-varying parameters $\mc{M}_s =\{m^1_s, \ldots, m^N_s\} = \mc{C}_s\times\mc{P}_s$ at each state $s \in [S]$, where $m^i_s = (C^i_s, P^i_s)$.
\end{exmp}}

\change{Environmental non-stationarity differs from parameter uncertainty and yet is closely related. Parameter-uncertain dynamic programming assumes that the MDP has stationary yet unknown stochastic dynamics within a bounded set, and its performance can be bounded via worst-case performance in} robust MDP, risk-sensitive reinforcement learning, and zero-sum stochastic games---value functions that result from adversarial selections of the MDP parameters. 
\change{Under environmental non-stationarity, we assume that at every time instance, the MDP parameters are known but will vary in time unpredictably. Environmental non-stationarity is a better assumption than parameter uncertainty for scenarios such as  Example~\ref{ex:wind_uncertainty}, where the dynamics are stochastic, observable, non-adversarial, yet time-varying. } 

\change{Consider dynamic programming under environmental non-stationarity: at every time instance, the dynamic program is updated with respect to known but time-varying MDP parameters. Although interesting and highly relevant to many trajectory planning problems, this setting has no convergence guarantees. In fact, value iteration will most definitely diverge and can be demonstrated using simple examples. Does this mean that dynamic program has no convergence guarantees under environmental non-stationarity?}

\change{In this paper, we introduce a set-based framework to non-stationary MDPs that provides convergence guarantees under Hausdorff distance, and demonstrate that this set-based convergence also applies to parameter-uncertain MDPs and is related to dynamic programming robust dynamic programming.} 



\textbf{Contributions}.
\change{For environmental nonstationarity bounded by compact sets, } we propose the set-extensions of \emph{value operators}: a general class of contraction operators that \change{extends the Bellman operator and the policy evaluation operator. }
We prove the existence of compact \emph{fixed point sets} of the set-based value operators and show that the set-based value iteration converges. In a non-stationary Markovian environment, standard value iteration may not converge. However, we can show that the point-to-set distance of the resulting value function trajectory to the fixed point set always goes to zero in the limit. We derive a \emph{containment condition} that is sufficient for the fixed point sets to contain their own extremal elements. Within robust MDPs, we show that the containment condition generalizes the rectangularity condition, such that the optimal worst-case policy, or the robust policy, exists when the containment condition is satisfied. We then derive the relationship between the fixed point sets of 1) the set-based optimistic policy evaluation operator, 2) the set-based robust policy evaluation operator, and 3) the set-based Bellman operator.
Given a value operator and a compact MDP parameter uncertainty set, we present an algorithm that computes the bounds of the corresponding fixed point set and derive its convergence guarantees. Finally, we apply our results to the wind-assisted navigation of high altitude platform systems relevant to space exploration~\cite{wolf2010probabilistic} and show that our algorithms can be used to derive policies with better guarantees.


\textbf{Related research}.
MDP with parameter uncertainty is well studied in robust control and reinforcement learning. In control theory, the worst-case cost-to-go with respect to state-decoupled parameter uncertainties is derived via a minmax variation of the Bellman operator in~\cite{givan2000bounded,iyengar2005robust,nilim2005robust,wiesemann2013robust}. The cost-to-go under parameter uncertainty with coupling between states and time steps is similarly bounded in~\cite{mannor2016robust,goyal2022robust}. The effect of statistical uncertainty on the optimal cost-to-go is studied in~\cite{nilim2005robust,mannor2016robust,wiesemann2013robust,yang2017convex}. Recently, MDP with parameter uncertainty has gained traction in the reinforcement learning community due to the presence of uncertainty in real world problems such as traffic signal control and multi-agent coordination~\cite{kumar2020conservative,lecarpentier2019non,padakandla2020reinforcement}. Most RL research extends the minmax worst-case analysis to methods such as Q-learning and SARSA. Recently, methods for value-based RL using non-contracting operators have been investigated in~\cite{bellemare2016increasing}. 

As opposed to the worst-case approach to analyzing MDPs under parameter uncertainty, we do not assume adversarial MDP parameter selection. Instead, we derive a set of cost-to-gos that is invariant with respect to the compact parameter uncertainty sets for order-preserving, $\alpha$-contracting operators, a class that the Bellman operator belongs to. We continue from our previous work~\cite{li2021bounding}, in which we analyzed the set-based Bellman operator for cost uncertainty only.

\textbf{Notation}: 
A set of $N$ elements is given by $[N] = \{0, \ldots, N-1\}$. We denote the set of matrices of $i$ rows and $j$ columns with real (non-negative) entries as $\reals^{i \times j}$ ($\reals_+^{i\times j}$), respectively. Matrices and some integers are denoted by capital letters, $X$, while sets are denoted by cursive typeset $\mc{X}$. The set of all \emph{compact subsets} of $\reals^d$ is denoted by $\mc{K}(\reals^d)$. The column vector of ones of size $N \in \mathbb{N}$ is denoted by $\ones_N = [1, \ldots, 1]^T \in \reals^{N\times 1}$. The identity matrix of size $S$ is denoted by $I_S$. The simplex of dimension $S$ is denoted by
\begin{equation}\label{eqn:simplex_definition}
    \Delta_S = \{p \in \reals^S \ | \ \ones_{S}^\top  p = 1, \ p \geq 0\}.
\end{equation}
A vector $h \in \reals^S$ has equivalent notation $(h_1,\ldots, h_s)$, where $h_s$ is the value of $h$ in the $s^{th}$ coordinate, $s \in [S]$. 
Throughout the paper, $\norm{\cdot}$ denotes the infinity norm in $\reals^S$.
\section{Discounted infinite-horizon MDP}\label{sec:setup}
A \emph{discounted infinite-horizon finite state MDP} is given by $([S], [A], $ $ P, C, \gamma)$, where $\gamma \in (0, 1)$ is the discount factor, $[S] = \{1, \ldots, S\}$ is the \textbf{finite set of states} and $[A] = \{1, \ldots, A\}$ is the \textbf{finite set of actions}. Without loss of generality, assume that each action is admissible from each state $s \in [S]$. 

\textbf{MDP Costs}.  $C \in\reals^{S\times A}$ is the matrix encoding the MDP cost. Each $C_{sa} \in \reals$ is the cost of taking action $a\in[A]$ from state $s\in[S]$. We also denote the cost of all actions at state $s$ by $c_s = [C_{s1},\ldots,C_{sA}] \in \reals^A$, such that $C = [c_1, \ldots, c_S]^\top$.

\textbf{MDP Transition Dynamics}. The transition probabilities when action $a$ is taken from state $s$ are given by $p_{sa} \in \Delta_S$. Collectively, all possible transition probabilities from state $s \in [S]$ are given by the matrix  $\textstyle P_s =[p_{s1},\ldots,p_{sA}] \in \Delta_S^A \subset \reals^{S\times A}$, and all possible transition probabilities in the MDP are given by the matrix $P = [P_1,\ldots, P_S] \in \Delta_{S}^{SA} \subset \reals^{S\times SA}$.

\textbf{MDP Objective}. We want to minimize the expected cost-to-go, or the \textbf{value vector} $V \in \reals^S$, defined per state as 
\begin{equation}\label{eqn:policy_infinite_horizon_cost}
 \textstyle V_{s} :=\ \mathbb{E}_s \Big\{ \sum_{t = 0}^\infty \gamma^t C_{s^t a^t} \ | \ s^0 = s\Big\}, \ \forall \ s \in [S],
\end{equation}
where $\mathbb{E}_{s}\{\cdot\}$ is the expected value of the input with respect to initial state $s$, and ($s^t, a^t$) are the state and action at time $t$. 
\begin{rem}
Although \emph{value function} is the standard term for the expected cost-to-go, we use value vector in this paper to emphasize that the cost-to-go values of finite MDPs belong in a finite dimensional space.
\end{rem}

\textbf{MDP Policy}. \change{The decision maker controls the \emph{policy},} denoted as $\pi = [\pi_1,\ldots, \pi_S] \in \Delta_A^S $, where the $a^{th}$ element of $\pi_s \in \Delta_A$ is the conditional probability of action $a$ being chosen from state $s$. 
\change{Using the policy, we can minimize the value vector~\eqref{eqn:policy_infinite_horizon_cost} in a closed-loop fashion.
\begin{equation}\label{eqn:expected_infinite_horizon_cost}
 \textstyle V_{s}^\star :=\ \underset{\pi^t \in \Delta_{A}^S}{\min}\  \mathbb{E}_s \Big\{ \sum_{t = 0}^\infty \gamma^t C_{s^t \pi^t(s^t)} \ | \ s^0 = s\Big\}, \ \forall \ s \in [S],
\end{equation}
Under policy $\pi_s$, the expected immediate cost at $s$ is given by $c_s^\top \pi_s \in \reals$ and the expected transition probabilities from $s$ is given by $P_s \pi_s \in \Delta_S$.} 
\subsection{Value operators}
Solving an MDP is equivalent to finding the value vector and the associated policy that minimizes the objective~\eqref{eqn:expected_infinite_horizon_cost}. Typical solution methods utilize \emph{order preserving}~\cite[Def.3.1]{schroder2003ordered}, \emph{$\alpha$-contractive operators} whose fixed points are the optimal value vectors (e.g. Bellman operator~\cite[Thm.6.2.3]{puterman2014markov}, $Q$-value operator~\cite{melo2001convergence}).
\begin{defn}[$\alpha$-Contraction]\label{def:contractOp}
Let $(\mc{X}, d)$ be a metric space with metric $d$. The operator $H:\mc{X}\mapsto \mc{X}$ is an $\alpha$-contraction if and only if there exists $\alpha\in [0,1)$ such that
\begin{equation}\label{eqn:alpha_contraction}
d(H(V), H(V')) \leq \alpha d(V, V'), \quad \forall \ V, \ V' \in \mc{X}.   
\end{equation}
\end{defn}
\begin{defn}[Order Preservation]\label{def:order_preservation}
Let $(\mc{X}, \leq)$ be an ordered space with partial order $\leq$. The operator $H: \mc{X} \mapsto \mc{X}$ is order preserving if for all $V, V' \in \mc{X}$ such that $V\leq V'$, $H(V) \leq H(V')$. 
\end{defn}
These operators are typically locally Lipschitz in MDP parameter space.
\begin{defn}[$K(V)$-Lipschitz]\label{def:kv_lipschitz}
Let $(\mc{X}, d_{\mc{X}})$ be a metric space with metric $d_{\mc{X}}$ and $(\mc{Y}, d_{\change{\mc{Y}}})$ be a metric space with metric $d_{\mc{Y}}$. The operator $H:\mc{X}\times\mc{Y} \mapsto \mc{X}$ is \emph{$K(V)$-Lipschitz} with respect to $\mc{M} \subset \mc{Y}$ if for all $V \in \mc{X}$, there exists $K(V)\in \reals_+$ such that
\begin{equation}\label{eqn:kl}
    d_{\mc{X}}(H(V,m),H(V,m'))\leq K(V)d_{\mc{Y}}(m, m'), \ \forall m, m' \in \mc{M}.
\end{equation}
\end{defn}
\begin{rem}
The property $\alpha$-contraction is a special case of Lipschitz continuity when the input and output spaces are identical and the Lipschitz constant is less than $1$.
\end{rem}
To capture operators with these properties, we define a \textbf{value operator} that takes inputs: value vector, MDP cost, and MDP transition probability. The MDP cost and transition probability are selected from an MDP parameter set $\mc{M}$.
\begin{defn}[Value operator]\label{def:value_operator} 
Consider the operator $h$, 
\begin{equation}\label{eqn:value_operator}
h: \reals^S \times \mc{M} \mapsto \reals^S, \ \mc{M}\subseteq \reals^{S\times A} \times \Delta_{S}^{SA}.
\end{equation}
We say  $h$~\eqref{eqn:value_operator} is a \textbf{value operator} on $\reals^S\times\mc{M}$ if 
\begin{enumerate}
    \item  For all  $m \in \mc{M}$, $h(\cdot, m)$ is an $\alpha$-contraction in $\reals^S$.
    \item For all  $m \in  \mc{M}$, $h(\cdot, m)$ is order preserving in $\reals^S$.
    \item For all $V \in \reals^S$, $h(V, m)$ is $K(V)$-Lipschitz on $\mc{M}$.
\end{enumerate}
\end{defn}
\begin{rem}
While we only consider value operators whose input's first component is $\reals^S$, Definition~\ref{def:value_operator} and the subsequent results can be extended to the space of $Q$-value functions by swapping $\reals^S$ for $\reals^{SA}$ in Definition~\ref{def:value_operator}~\cite{melo2001convergence}. 
\end{rem}
\begin{figure}
    \centering
    \includegraphics[width=\columnwidth]{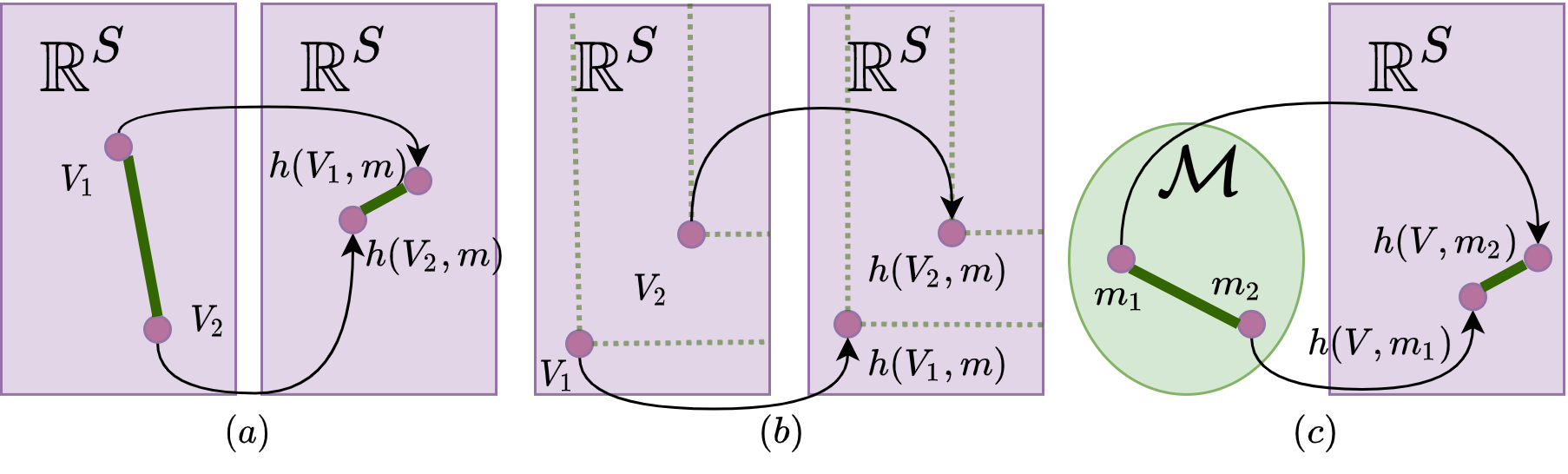}
    \caption{Illustration of the three value operator properties. (a) $\alpha$-contraction on $\reals^S$, (b) Order preservation on $\reals^S$, and (c) $K(V)$-Lipschitz in input space $\mc{M}$. }
    \label{fig:value_operator_visualization}
\end{figure}
An immediate consequence of the value operator $h$ being an $\alpha$-contractive and order-preserving operator on $\reals^S$ is that $h$ is continuous on $\reals^S\times \mc{M}$. 
\begin{lem}[Continuity]\label{lem:operator_continuity}
If $h$~\eqref{eqn:value_operator} is a value operator on $\reals^S\times\mc{M}$, $h$ is continuous on $\reals^S\times \mc{M}$.
\end{lem}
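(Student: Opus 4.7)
The plan is a standard two-step triangle inequality argument that leverages only the contraction property (hypothesis 1) and the $K(V)$-Lipschitz property (hypothesis 3) from Definition~\ref{def:value_operator}; order preservation is not needed here but is part of the hypothesis simply because $h$ is assumed to be a value operator.

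Fix an arbitrary point $(V_0, m_0) \in \reals^S \times \mc{M}$ and let $\varepsilon > 0$. The first step is to insert the intermediate term $h(V_0, m)$ and apply the triangle inequality to write
\begin{equation*}
\norm{h(V, m) - h(V_0, m_0)} \leq \norm{h(V, m) - h(V_0, m)} + \norm{h(V_0, m) - h(V_0, m_0)}.
\end{equation*}
The second step is to bound each summand: by the $\alpha$-contraction in the first argument, the first summand is at most $\alpha \norm{V - V_0}$, while by the $K(V_0)$-Lipschitz property in the second argument, the second summand is at most $K(V_0)\, d_{\mc{Y}}(m, m_0)$.

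The third step is to choose $\delta > 0$ small enough that both contributions are below $\varepsilon/2$. Concretely, taking $\delta := \varepsilon / (2(\alpha + K(V_0) + 1))$ guarantees that whenever $\norm{V - V_0} < \delta$ and $d_{\mc{Y}}(m, m_0) < \delta$, both bounds hold, and therefore $\norm{h(V,m) - h(V_0, m_0)} < \varepsilon$. Since $(V_0, m_0)$ was arbitrary, $h$ is continuous on $\reals^S \times \mc{M}$.

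There is no real obstacle in this proof; the only mild subtlety to flag is that the Lipschitz constant $K(V_0)$ depends on the point $V_0$, but since we are only arguing continuity (not uniform continuity), it suffices to fix $V_0$ first and then choose $\delta$ as a function of $V_0$ and $\varepsilon$. The offset ``$+1$'' in the denominator of $\delta$ is a cosmetic device to cover the degenerate case $\alpha = K(V_0) = 0$ without a separate argument.
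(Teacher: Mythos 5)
Your proof is correct and follows essentially the same route as the paper: both insert the one-coordinate-at-a-time intermediate term (your $h(V_0,m)$, the paper's $h(V,m_k)$), bound the first difference by the $\alpha$-contraction and the second by the $K(V_0)$-Lipschitz property at the fixed base point, the only cosmetic difference being your $\varepsilon$--$\delta$ phrasing versus the paper's sequential formulation. Your remark that $K(V_0)$ may depend on the base point but that this is harmless for (non-uniform) continuity is exactly the right subtlety to flag.
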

\change{See App.~\ref{app:2} for proof.}
Examples of value operators include the Bellman operator and the policy evaluation operators when the MDP cost and transition probability are input parameters rather than fixed parameters. 
\begin{defn}[Policy evaluation operator]\label{def:policy_operator}
Given a policy $\pi \in \Pi$, the vector-valued operator $g^\pi = (g^\pi_1,\ldots, g^\pi_S): \reals^{S}\times \reals^{S\times A}\times \Delta_{S}^{SA} \mapsto \reals^{S}$ is defined per state as
\begin{equation}\label{eqn:policy_operator} %
    g^\pi_s(V,  C, P) := c_s^\top \pi_s + \gamma \Big(P_s \pi_s\Big)^\top V, \ \forall s \in [S].
\end{equation}
Given $(C,P)$, $g^\pi(\cdot, C, P):\reals^{S}\mapsto\reals^S$ is a vector-valued operator whose fixed point is the expected cost-to-go of the MDP $([S], [A], C, P, \gamma)$ under $\pi$, denoted as $V^\pi(C, P)$~\cite[Thm.6.2.5]{puterman2014markov}. 
\begin{equation}~\label{eqn:fixed_points_policy}
    V^\pi(C, P) = g^\pi(V^\pi, C,P), \  V^\pi(C, P) \in \reals^S.
\end{equation}
When the context is clear, we denote $V^\pi(C,P)$ as $V^\pi$.
\end{defn}
\begin{defn}[Bellman operator]
\label{def:bellmanOp}The vector-valued operator $f = (f_1,\ldots, $ $f_S): \reals^{S}\times \reals^{S\times A}\times \Delta_S^{SA} \mapsto \reals^{S}$ is defined per each state as
\begin{equation}
f_s(V, C, P) :=  \inf_{\pi_s \in \Delta_A}\ g^\pi_s(V, C, P),\ \forall\, s\in [S].\label{eqn:bellman_operator}
\end{equation}
The corresponding optimal policy $\pi^\star = (\pi_1^\star,\ldots, \pi_s^\star)$ is defined per state as  $\pi_s^\star \in \argmin_{\pi_s} g^\pi_s(V, C, P)$~\eqref{eqn:bellman_operator}. 
One such policy is defined for all $(s, a) \in [S]\times[A]$ by
\begin{equation}\label{eqn:optimalPol}
    \pi^\star_{sa}  := \begin{cases} > 0 & a \in\underset{a \in [A]}{\argmin} \ C_{sa} + \gamma p_{sa}^\top V\\
    0 & \text{otherwise}
    \end{cases}, \sum_{a\in[A]} \pi^\star_{sa} = 1.
\end{equation}
where $\argmin_{a\in[A]}(h)$ is the set of minimizing actions for the  function $h$. An optimal policy in the form~\eqref{eqn:optimalPol} always exists for a discounted infinite horizon MDP~\cite[Thm 6.2.10]{puterman2014markov}. 
Given parameters $(C,P)$, $f(\cdot, C,P):\reals^{S}\mapsto\reals^S$ is a vector operator whose fixed point is the optimal cost-to-go for the MDP $([S], [A], P, C, \gamma)$, denoted as $V^{B}(C,P)$.
\begin{equation}~\label{eqn:fixed_points_bellman}
    V^{B}(C,P)  = f(V^{B}, C,P), \  V^{B}(C,P) \in \reals^S.
\end{equation}
When the context is clear, we denote $V^B(C,P)$ as $V^B.$
\end{defn}
We show that both~\eqref{eqn:policy_operator} and~\eqref{eqn:bellman_operator} are value operators. 
\begin{lem}
The Bellman operator~\eqref{eqn:bellman_operator} and the policy evaluation operators~\eqref{eqn:policy_operator} for all $\pi \in \Pi$ are value operators on $\reals^S \times \mc{M}$ where $\mc{M} \subseteq \reals^{S\times A} \times \Delta_{S}^{SA}$~\eqref{eqn:value_operator}.
\end{lem}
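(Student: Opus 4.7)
The plan is to verify the three value-operator properties first for the policy evaluation operators $g^\pi$ and then transfer them to the Bellman operator $f$ using the pointwise identity $f_s(V,m) = \inf_{\pi_s \in \Delta_A} g^\pi_s(V,m)$ from~\eqref{eqn:bellman_operator}. For fixed $m=(C,P)\in\mc{M}$ and fixed $\pi$, observe that $V\mapsto g^\pi(V,m)$ is affine with slope matrix $\gamma [P_1\pi_1,\dots,P_S\pi_S]^\top$, whose rows all lie in $\Delta_S$; for fixed $V$, $m\mapsto g^\pi(V,m)$ is also affine. Both operators inherit all three properties from this structure.

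For $g^\pi$, I would first note that each $P_s\pi_s\in\Delta_S$ and $\pi_s\in\Delta_A$, so
\[
|g^\pi_s(V,m)-g^\pi_s(V',m)| \;=\; \gamma\,|(P_s\pi_s)^\top(V-V')| \;\leq\; \gamma\norm{V-V'},
\]
and taking the maximum over $s$ yields contraction with $\alpha=\gamma\in(0,1)$. Order preservation is immediate from $P_s\pi_s\geq 0$: if $V\leq V'$ componentwise, then $(P_s\pi_s)^\top V\leq (P_s\pi_s)^\top V'$ for every $s$. For the Lipschitz property in $m$, I would equip $\mc{M}$ with the infinity norm inherited from $\reals^{S\times A}\times\reals^{S\times SA}$, then split
\[
g^\pi_s(V,m)-g^\pi_s(V,m') \;=\; (c_s-c_s')^\top\pi_s \;+\; \gamma\bigl((P_s-P_s')\pi_s\bigr)^\top V,
\]
and bound each term using $\norm{\pi_s}_1=1$, obtaining a constant of the form $K(V)=1+\gamma\norm{V}$.

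For the Bellman operator, I would transfer each property via the elementary fact that $|\inf_x u(x)-\inf_x v(x)|\leq\sup_x |u(x)-v(x)|$ applied with $x=\pi_s$ ranging over the compact set $\Delta_A$. Contraction and the $K(V)$-Lipschitz bound then follow by taking the supremum over $\pi_s$ of the bounds already established for $g^\pi$, with the same constants $\alpha$ and $K(V)$. Order preservation carries over directly: $V\leq V'$ gives $g^\pi_s(V,m)\leq g^\pi_s(V',m)$ for every $\pi_s$, hence $\inf_{\pi_s}g^\pi_s(V,m)\leq \inf_{\pi_s}g^\pi_s(V',m)$. Existence of the infima follows from continuity of $g^\pi_s$ in $\pi_s$ on the compact simplex $\Delta_A$.

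The only step requiring any care is (iii), the Lipschitz bound, because $K(V)$ genuinely depends on $V$ through the $\gamma(P_s-P_s')^\top V$ term and one must pick a norm on the product space $\reals^{S\times A}\times\Delta_S^{SA}$ that simultaneously controls both the cost and transition components cleanly. Properties (i) and (ii) reduce to one-line arguments exploiting the stochasticity of $P_s\pi_s$. Transferring from $g^\pi$ to $f$ requires no new ideas beyond the standard $\inf$-of-differences inequality.
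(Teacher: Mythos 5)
Your proposal is correct, and its skeleton matches the paper's: establish the three properties for $g^\pi$ and lift them to $f$ through the pointwise infimum over $\Delta_A$. The differences are in execution. For contraction and order preservation the paper simply cites Puterman (Prop.~6.2.4 and Lem.~6.1.2) for $g^\pi$, whereas you derive both in one line from the affine structure and the fact that $P_s\pi_s\in\Delta_S$; this makes the argument self-contained at no cost. The more substantive divergence is in the $K(V)$-Lipschitz step for $f$: the paper works directly with the two optimal policies $\pi^\star$ and $\hat\pi$ for the two parameter choices, uses suboptimality of $\pi^\star$ under $(C',P')$, and runs a case analysis on the sign of $f_s(V,C',P')-f_s(V,C,P)$; you instead prove the Lipschitz bound for $g^\pi$ uniformly in $\pi$ and invoke $\lvert\inf_x u(x)-\inf_x v(x)\rvert\leq\sup_x\lvert u(x)-v(x)\rvert$. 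These are the same idea (the suboptimality argument is precisely the proof of that inequality), but your packaging is cleaner: it handles the $\alpha$-contraction and the $K(V)$-Lipschitz transfer to $f$ with a single tool, avoids the sign cases, and sidesteps the paper's slightly awkward $\max\{\norm{\pi^\star_s}_\infty,\norm{\hat\pi_s}_\infty\}$ factor (note that using $\norm{\pi_s}_1=1$ rather than $\norm{\pi_s}_\infty\leq 1$ is the right way to bound $(c_s-c_s')^\top\pi_s$ by $\norm{c_s-c_s'}_\infty$, so your bound is actually tighter in form). You correctly flag the only delicate point, the choice of norm on the product space $\reals^{S\times A}\times\Delta_S^{SA}$; the paper glosses over this, and your $K(V)=1+\gamma\norm{V}$ versus its $\max(1,\gamma\norm{V}_\infty)$ is an immaterial discrepancy since Definition~\ref{def:kv_lipschitz} only requires some finite constant.
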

\change{See App.~\ref{app:3} for proof.}

\begin{rem}
Beyond the policy evaluation operator and the Bellman operator, many algorithms in reinforcement learning can be reformulated using value operators. For example, it's not difficult to show that the Q-learning operator~\cite{melo2001convergence} is a value operator on the vector space $\reals^{SA}$. 
\end{rem}

\section{Set-based value operators}\label{sec:contraction_operators}
\change{Motivated by stochastic and time-varying Markovian dynamics,} 
we now consider \change{set-based} value operators with respect to a compact set of  MDP parameters. 
We first introduce Hausdorff-type set distances. 
\begin{defn}[Point-to-set Distance]
The distance between a value vector and a set $\mc{V} \subseteq \reals^S$ is given by
\begin{equation}\label{eqn:distance_function}
   \textstyle W \mapsto d(W, \mc{V}) := \underset{V \in \mc{V}}{\inf}\norm{W - V}_{\change{\infty}}.
\end{equation}
\end{defn}
On the space of compact subsets of $\reals^{S}$, given by $\mc{K}(\reals^S)$,  the distance between value vector sets extends~\eqref{eqn:distance_function} and is given by the Hausdorff distance~\cite{henrikson1999completeness}.
\begin{defn}[Set-to-set Distance] The Hausdorff distance between two value vector sets $\mc{V}, \mc{W} \subseteq \reals^S$ is given by
\begin{equation}\label{eqn:hausdorff_distance}
    \begin{aligned}
 d_{\mc{K}}(\mc{V}, \mc{W}) := \max\left\{ \sup_{V \in \mc{V}} d(V,\mc{W}),\sup_{W \in \mc{W}} d(W,\mc{V})\right \}.
\end{aligned}
\end{equation}
\end{defn}
We use $(\mc{K}(\reals^{S}), d_{\mc{K}})$ to denote the metric space formed by the set of all compact subsets of $\reals^S$ under the Hausdorff distance $d_{\mc{K}}$. The induced Hausdorff space is complete if and only if the original metric space is complete~\cite[Thm 3.3]{henrikson1999completeness}. Therefore, $(\mc{K}(\reals^{S}), d_{\mc{K}})$ is a complete metric space.

For a value operator $h$~\eqref{eqn:value_operator}, we ask the following question: what is the \emph{set} of possible value vectors when the MDP has parameter \change{non-stationarity} given by $\mc{M}$? To resolve this, we define the set-based value operator $H$. 
\begin{defn}[Set-based Value Operator]\label{def:set_operator} The set-valued operator $H$ is induced by $h$ on $\reals^S\times\mc{M}$~\eqref{eqn:value_operator} and is defined as
\begin{equation}\label{eqn:general_set_operator}
  H(\mc{V}):=\left\{h(V,m)\ | \  (V, m) \in \mc{V}\times\mc{M}\right\} \subseteq \reals^S,
\end{equation}
where $\mc{V} \subseteq \reals^S$ is a subset of the value vector space.
\end{defn}
\begin{figure}
    \centering
    \includegraphics[width=0.9\columnwidth]{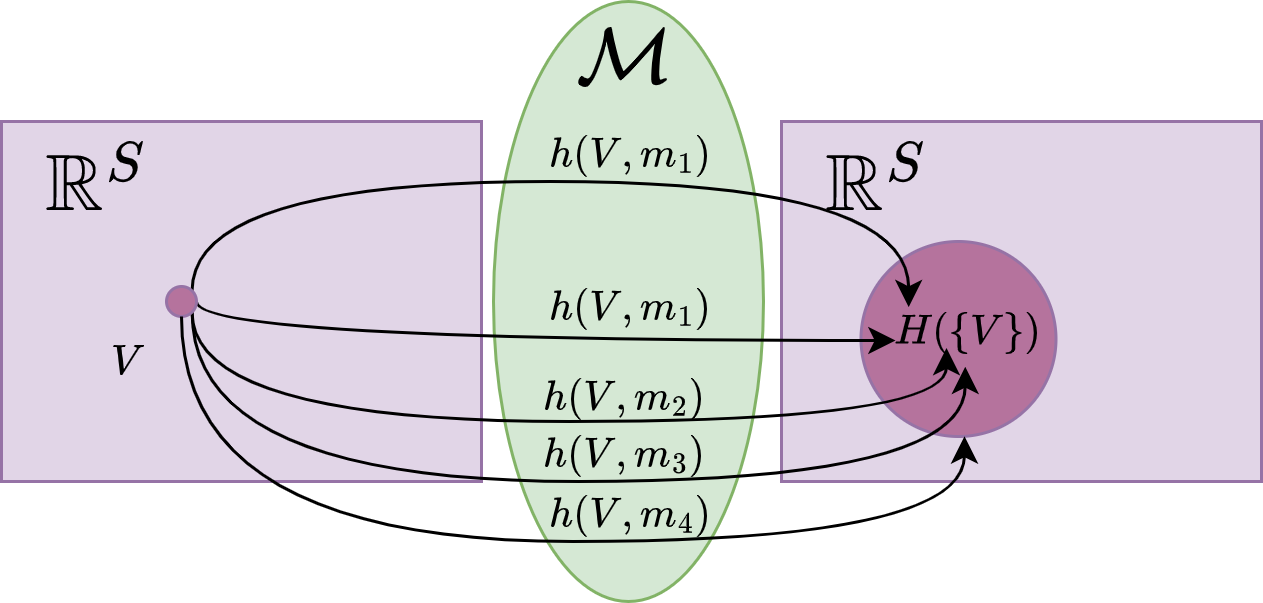}
    \caption{Illustration of the set-based operator $H(\mc{V})$ applied to the singleton set $\mc{V} = \{V\} \subset \reals^S$, we compute $h(V,m)$ for every parameter $m\in\mc{M}$ and collect the output $h(V, m)$, such that $H(\mc{V}) = \cup_{m\in\mc{M}} h(V, m)$.}
    \label{fig:my_label}
\end{figure}
We denote the set-based value operator induced by the Bellman operator~\eqref{eqn:bellman_operator} and policy evaluation operators~\eqref{eqn:policy_operator} as $F$ and $G^\pi$, respectively, such that for any value vector set $\mc{V} \subseteq \reals^S$,
\begin{equation}\label{eqn:set_bellman_ooperator}
    F(\mc{V}) := \left\{f(V, C, P) \ | \ (V, C, P) \in \mc{V}\times\mc{M}\right\},
\end{equation}
\begin{equation}\label{eqn:set_policy_operator}
    G^\pi(\mc{V}) := \left\{ g^\pi(V, C, P) \ | \ (V, C, P) \in \mc{V}\times\mc{M}\right\}, \ \forall \ \pi \in \Pi.
\end{equation}
The set-based Bellman operator \change{$F(\mc{V})$} is the union over all \change{one-step} \emph{optimal} value vectors, \change{which may result from different policies}, while \change{$G^\pi(\mc{V})$} is the union over all value vectors that result from \change{the same policy} $\pi$. 

We can ask the following question: is there a set of value vectors that is invariant with respect to $H$? Similar to the value operators $h$ from Definition~\ref{def:value_operator}, we can affirmatively answer this question by showing that $H$ is $\alpha$-contractive on $\mc{K}(\reals^S)$. 
\begin{thm}\label{thm:general_compact_contraction}
If $h$ is a value operator on $\reals^S\times \mc{M}$~\eqref{eqn:value_operator} and $\mc{M}$ is compact, then the induced set value operator $H$~\eqref{eqn:general_set_operator}  satisfies
\begin{enumerate}
    \item For all $\mc{V}\in \mc{K}(\reals^S)$, $H\left(\mc{V}\right)\in \mc{K}(\reals^{S})$;
    \item $H$ is an $\alpha$-contractive on $(\mc{K}(\reals^{S}), d_{\mc{K}})$~\eqref{eqn:hausdorff_distance} with a unique fixed point set $\mc{V}^\star$ given by
    \begin{equation}\label{eqn:fixed_point_set}
        \textstyle H(\mc{V}^\star) = \mc{V}^\star, \quad \mc{V}^\star \in \mc{K}(\reals^S);
    \end{equation}
    \item The sequence $\{\mc{V}^k\}_{k\in\mathbb{N}}$ where $\mc{V}^{k+1}=H(\mc{V}^k)$ converges to $\mc{V}^\star$ for any $\mc{V}^0\in \mc{K}(\reals^{S})$.
\end{enumerate}
In particular, these hold for $F$~\eqref{eqn:set_bellman_ooperator} and $G^\pi$~\eqref{eqn:set_policy_operator}, whose fixed point sets are denoted as $\mc{V}^B$ and $\mc{V}^\pi$, respectively.
\begin{equation}
    F(\mc{V}^B) = \mc{V}^B \in \mc{K}(\reals^S), \ G^\pi(\mc{V}^\pi) = \mc{V}^\pi \in \mc{K}(\reals^S), \ \forall \pi \in \Pi.
\end{equation}
\end{thm}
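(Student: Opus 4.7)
The plan is to verify the three claims in turn, relying on Lemma~\ref{lem:operator_continuity} (continuity of $h$), the completeness of $(\mc{K}(\reals^S), d_{\mc{K}})$ noted in the text, and the Banach fixed point theorem.

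For claim (1), I would argue as follows. Since $\mc{V} \in \mc{K}(\reals^S)$ and $\mc{M}$ is compact by hypothesis, the product $\mc{V} \times \mc{M}$ is compact in $\reals^S \times \mc{M}$. By Lemma~\ref{lem:operator_continuity}, $h$ is continuous on $\reals^S \times \mc{M}$, hence $H(\mc{V}) = h(\mc{V} \times \mc{M})$ is the continuous image of a compact set and therefore compact. So $H$ maps $\mc{K}(\reals^S)$ into itself.

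For claim (2), I need to bound $d_{\mc{K}}(H(\mc{V}), H(\mc{W}))$ for arbitrary $\mc{V}, \mc{W} \in \mc{K}(\reals^S)$. Pick any $y \in H(\mc{V})$ and write $y = h(V,m)$ with $V \in \mc{V}$, $m \in \mc{M}$. Because $\mc{W}$ is compact and $W \mapsto \norm{V-W}$ is continuous, there exists $W^\star \in \mc{W}$ achieving $\norm{V - W^\star} = d(V, \mc{W})$. Set $y' := h(W^\star, m) \in H(\mc{W})$. The $\alpha$-contraction property of $h(\cdot, m)$ yields
\begin{equation*}
\norm{y - y'} = \norm{h(V,m) - h(W^\star,m)} \leq \alpha \norm{V - W^\star} = \alpha\, d(V, \mc{W}) \leq \alpha\, d_{\mc{K}}(\mc{V}, \mc{W}).
\end{equation*}
Since $y' \in H(\mc{W})$, this gives $d(y, H(\mc{W})) \leq \alpha\, d_{\mc{K}}(\mc{V}, \mc{W})$, and taking the supremum over $y \in H(\mc{V})$ yields $\sup_{y \in H(\mc{V})} d(y, H(\mc{W})) \leq \alpha\, d_{\mc{K}}(\mc{V}, \mc{W})$. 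Swapping the roles of $\mc{V}$ and $\mc{W}$ gives the symmetric bound, and taking the max of the two establishes $d_{\mc{K}}(H(\mc{V}), H(\mc{W})) \leq \alpha\, d_{\mc{K}}(\mc{V}, \mc{W})$.

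For claim (3), having established that $H:\mc{K}(\reals^S) \to \mc{K}(\reals^S)$ is an $\alpha$-contraction on the complete metric space $(\mc{K}(\reals^S), d_{\mc{K}})$, the Banach fixed point theorem produces a unique $\mc{V}^\star \in \mc{K}(\reals^S)$ with $H(\mc{V}^\star) = \mc{V}^\star$, and the iterates $\mc{V}^{k+1} = H(\mc{V}^k)$ converge to $\mc{V}^\star$ from any initial $\mc{V}^0 \in \mc{K}(\reals^S)$. The statements for $F$ and $G^\pi$ follow by specialization, since the preceding lemma established that $f$ and $g^\pi$ are value operators. The main subtlety is ensuring in claim (2) that the infimum defining $d(V, \mc{W})$ is attained so that a concrete $W^\star \in \mc{W}$ can be plugged into $h(\cdot, m)$; this is exactly where compactness of $\mc{W}$ is used, and avoiding it would require an $\epsilon$-argument instead, but since $\mc{W}$ is assumed compact the direct argument above suffices.
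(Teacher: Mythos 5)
Your proposal is correct and follows essentially the same route as the paper: compactness of $H(\mc{V})$ via continuity of $h$ on the compact product $\mc{V}\times\mc{M}$, the $\alpha$-contraction bound on $d_{\mc{K}}$ obtained by matching each $h(V,m)$ with $h(V',m)$ for the same parameter $m$, and Banach's fixed point theorem on the complete space $(\mc{K}(\reals^S), d_{\mc{K}})$. The only cosmetic difference is that you invoke compactness of $\mc{W}$ to attain the infimum at a point $W^\star$, whereas the paper passes the contraction inequality through the infimum directly ($d(h(V,m),H(\mc{V}'))\leq \alpha\inf_{V'\in\mc{V}'}\norm{V-V'}$), which does not require attainment; both are valid.
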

\begin{pf}
The first statement follows from Lemma~\ref{lem:operator_continuity}, since the image of a compact set by a continuous function is compact~\cite{rudin1964principles}.
Let us prove the second statement: for some $\beta\in (0,1)$, for all, $\mc{V},\mc{V}'\in \mc{K}(\reals^S)$:
\begin{equation*}
    \begin{aligned}
    & d_{\mc{K}}(H(\mc{V}),H(\mc{V}')) \\
    = &\max\left\{\sup_{\substack{V\in \mc{V}\\ m\in \mc{M}}}  d\left(h(V,m), H(\mc{V}')\right), \sup_{\substack{V'\in \mc{V}'\\ m'\in \mc{M}}} d(h(V',m'), H(\mc{V}))\right\} \\
\leq  & \beta d_{\mc{K}}(\mc{V},\mc{V}')
    \end{aligned}
\end{equation*}
Take $(V, m) \in \mc{V}\times\mc{M}$, then 
$d(h(V,m), H(\mc{V}'))\leq \inf_{V'\in \mc{V}'}\norm{h(V,m)-h(V',m)}_{\change{\infty}}\leq \alpha \inf_{V'\in \mc{V}'}\norm{V-V'}_{\change{\infty}}$ holds from the $\alpha$-contractive property of $h$. Finally,
\begin{equation*}
    \begin{aligned}
    \sup_{\substack{V\in \mc{V}\\ m\in \mc{M}}}d(h(V,m), H(\mc{V}'))\leq& \alpha \sup_{V\in  \mc{V}} \inf_{V'\in \mc{V}'} \norm{V-V'}_\infty\\
\leq & \alpha d_{\mc{K}}(\mc{V},\mc{V}') 
    \end{aligned}
\end{equation*}
We use the same technique to prove that 
\[\sup_{\substack{V'\in \mc{V}'\\ m'\in \mc{M}}}d(h(V',m'), H(\mc{V})) \leq \alpha d_{\mc{K}}(\mc{V},\mc{V}').\]
Finally,
$d_{\mc{K}}(H(\mc{V}),H(\mc{V}'))\leq \alpha d_{\mc{K}}(\mc{V},\mc{V}')$. From the Banach fixed point theorem and the completeness of $({\mc{K}}(\reals^S), d_{\mc{K}})$~\cite[Thm 3.3]{henrikson1999completeness}, $H$ has a unique fixed point \change{$\mc{V}^\star$} in ${\mc{K}}(\reals^S)$.

The third point is a consequence of the Banach fixed point theorem. 
Finally,  $f$ and $g^\pi$ are value operators~\eqref{eqn:value_operator} on $\reals^S\times\mc{M}$, therefore this theorem's statements apply. \qed
\end{pf}

\begin{rem}[Set-based value iteration]
An important consequence of
Theorem~\ref{thm:general_compact_contraction} is the existence of the \emph{set-based value iteration}, given by
\begin{equation}\label{eqn:set_based_value_iteration}
   \mc{V}^{k+1} = H(\mc{V}^k), \ \mc{V}^0 \in \mc{K}(\reals^S). 
\end{equation}
Analogous to standard value iteration, \eqref{eqn:set_based_value_iteration} is a sequence of value vector sets in  $\mc{K}(\reals^S)$ that converges to the fixed point set $\mc{V}^\star \in \mc{K}(\reals^S)$.
\end{rem}
\section{Properties of the fixed point set}\label{sec:properties_fixed_point_sets}
For the MDP parameters $(C,P)$, the fixed point of $h(\cdot, C, P)$ is typically meaningful for the corresponding MDP. For example, the fixed point of a policy evaluation operator $g^\pi(\cdot, C, P)$~\eqref{eqn:policy_operator} is the expected cost-to-go under policy $\pi$, and the fixed point of the Bellman operator $f(\cdot, C, P)$~\eqref{eqn:bellman_operator} is the minimum cost-to-go when $\pi$ can be freely chosen. In this section, we derive properties of the fixed point set $\mc{V}$ of $H$~\eqref{eqn:general_set_operator} in the context of non-stationary value iteration. 

\subsection{Non-stationary value iteration}
Given a value operator $h$ on $\reals^S\times \mc{M}$, we consider value iteration under a dynamic parameter uncertainty model discussed in~\cite{nilim2005robust}, where at every iteration, a new set of MDP parameters $m^k$ is chosen from $\mc{M}$ as 
\begin{equation}\label{eqn:time_varying_value_iteration}
    V^{k+1} = h(V^k, m^k), \ V^0 \in \reals^S, \ m^k \in \mc{M}, \forall k \in \mathbb{N}.
\end{equation}
In robust MDP literature~\cite{iyengar2005robust,nilim2005robust}, $m^k$ is modified by an adversarial opponent of the MDP decision maker such that~\eqref{eqn:time_varying_value_iteration} converges to a worst-case value vector. We consider a more general scenario in which $m^k$ is chosen from the closed and bounded set $\mc{M}$ without any probabilistic prior. In this scenario, convergence of $V^k$ in $\reals^S$ will not occur for all possible sequences of $\{m^k\}_{k\in\naturals}$. However, we can show convergence results on the set domain by leveraging our fixed point analysis of the set-based operator $H$~\eqref{eqn:general_set_operator}.
\begin{prop}\label{prop:param_varying} Let $\mc{V}^\star$ be the fixed point set of the set-based value operator $H$~\eqref{eqn:general_set_operator} induced by $h$ on $\reals^S\times \mc{M}$~\eqref{eqn:value_operator}. If the non-stationary value iteration~\eqref{eqn:time_varying_value_iteration} satisfies  $\{m^k\}_{k\in \mathbb N}\subset \mc{M}$, then the sequence $\{V^k\}_{k\in\mathbb{N}}$ defined by~\eqref{eqn:time_varying_value_iteration} satisfies
\begin{enumerate}
    \item  $\lim_{k\to +\infty} d(V^k, \mc{V}^\star) = 0$, 
    \item there exists a sub-sequence $\{V^{\varphi(k)}\}_{k\in\mathbb{N}}$ that converges to a point in $\mc{V}^\star$ as $\lim_{k\rightarrow\infty} V^{\varphi(k)} \in \mc{V}^\star$.
\end{enumerate}
\end{prop}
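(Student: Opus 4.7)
The plan is to exploit two facts in tandem: the per-step contraction in the value-vector argument guaranteed by Definition~\ref{def:value_operator}, and the forward invariance $H(\mc{V}^\star)=\mc{V}^\star$ supplied by Theorem~\ref{thm:general_compact_contraction}. Together these should give a contraction inequality directly on the scalar $d(V^k,\mc{V}^\star)$, from which both claims follow.

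For statement (1), I would argue as follows. Fix $k\in\mathbb{N}$. Since $\mc{V}^\star\in\mc{K}(\reals^S)$ is compact and $\norm{\cdot}$ is continuous, the infimum defining $d(V^k,\mc{V}^\star)$ is attained at some $V^\star_k\in\mc{V}^\star$. The invariance $H(\mc{V}^\star)=\mc{V}^\star$ applied to the pair $(V^\star_k,m^k)\in\mc{V}^\star\times\mc{M}$ gives $h(V^\star_k,m^k)\in\mc{V}^\star$, so this image is an admissible competitor in the infimum at step $k+1$. Combining with $\alpha$-contraction of $h(\cdot,m^k)$ yields
\begin{equation*}
d(V^{k+1},\mc{V}^\star)\;\leq\;\norm{h(V^k,m^k)-h(V^\star_k,m^k)}\;\leq\;\alpha\norm{V^k-V^\star_k}\;=\;\alpha\,d(V^k,\mc{V}^\star).
\end{equation*}
Iterating gives $d(V^k,\mc{V}^\star)\leq\alpha^k d(V^0,\mc{V}^\star)$, and since $\alpha\in[0,1)$ this tends to zero. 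Note that this bound holds uniformly over every admissible sequence $\{m^k\}\subset\mc{M}$, which is the point of working in the set framework.

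For statement (2), I would first use (1) plus compactness of $\mc{V}^\star$ to bound the tail of $\{V^k\}$. Concretely, since $\mc{V}^\star$ is compact it is bounded by some $R$, and since $d(V^k,\mc{V}^\star)\to0$ there is $K$ beyond which $d(V^k,\mc{V}^\star)\leq1$, so $\norm{V^k}\leq R+1$ for $k\geq K$. Hence $\{V^k\}_{k\in\mathbb{N}}$ is bounded in $\reals^S$, and by Bolzano--Weierstrass admits a convergent subsequence $\{V^{\varphi(k)}\}$ with limit $V^\infty\in\reals^S$. Continuity of $d(\cdot,\mc{V}^\star)$ combined with $d(V^{\varphi(k)},\mc{V}^\star)\to0$ forces $d(V^\infty,\mc{V}^\star)=0$, and because $\mc{V}^\star$ is closed this is equivalent to $V^\infty\in\mc{V}^\star$.

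The main obstacle I anticipate is not any technical estimate but the conceptual step of recognising that the correct competitor at step $k+1$ is $h(V^\star_k,m^k)$ rather than some fixed element of $\mc{V}^\star$: this is what lets the same $m^k$ cancel on both sides and turns the pointwise $\alpha$-contraction of $h$ into a contraction of $d(V^k,\mc{V}^\star)$. Everything else reduces to compactness bookkeeping. One minor subtlety worth being explicit about is that the argument only needs $\{m^k\}\subset\mc{M}$, with no measurability or selection hypothesis, because $h(V^\star_k,m^k)\in\mc{V}^\star$ by the set-identity $H(\mc{V}^\star)=\mc{V}^\star$ applied pointwise in $k$.
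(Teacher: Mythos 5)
Your proof is correct, but it takes a genuinely different route from the paper's. For statement (1), the paper embeds the trajectory into the set-based value iteration $\mc{V}^0=\{V^0\}$, $\mc{V}^{k+1}=H(\mc{V}^k)$, notes $V^k\in\mc{V}^k$ by induction, and bounds $d(V^k,\mc{V}^\star)\leq d_{\mc{K}}(\mc{V}^k,\mc{V}^\star)\to 0$ using Theorem~\ref{thm:general_compact_contraction}; you instead run the contraction directly on the scalar $d(V^k,\mc{V}^\star)$ by picking the nearest point $V^\star_k\in\mc{V}^\star$ (attained by compactness) and using $h(V^\star_k,m^k)\in H(\mc{V}^\star)=\mc{V}^\star$ as the competitor. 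Your key step $d(V^{k+1},\mc{V}^\star)\leq\alpha\,d(V^k,\mc{V}^\star)$ is valid and buys something the paper does not state: an explicit geometric rate $d(V^k,\mc{V}^\star)\leq\alpha^k d(V^0,\mc{V}^\star)$, uniform over all admissible parameter sequences, whereas the paper's argument as written is only qualitative (though a rate could also be extracted from the Hausdorff contraction of $H$). For statement (2), the paper performs an explicit double subsequence extraction --- indices $\psi_1(k)$ where $d(V^n,\mc{V}^\star)<(k+1)^{-1}$, nearby points $y^{\psi_1(k)}\in\mc{V}^\star$, then a convergent sub-subsequence of the $y$'s by compactness of $\mc{V}^\star$ --- while you apply Bolzano--Weierstrass to the bounded tail of $\{V^k\}$ and conclude via continuity of $d(\cdot,\mc{V}^\star)$ and closedness of $\mc{V}^\star$; your version is shorter and equally rigorous. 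The paper's approach has the advantage of reusing the set-iteration machinery (and its auxiliary Lemma~\ref{basicdh}) that it needs elsewhere, e.g.\ in the proof of Theorem~\ref{thm:bound_containment_inexact}, but as a standalone proof of this proposition yours is cleaner and slightly stronger.
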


\begin{pf} Let $\{V^k\}_{k\in\mathbb N}$ be a sequence defined by $\mc{V}^0 = \{V^0\}$ and $\mc{V}^{k+1} = H(\mc{V}^k)$, where $H$~\eqref{eqn:general_set_operator} is the set operator induced by $h$ on $\reals^S\times\mc{M}$. We first show statement 1). From Theorem~\ref{thm:general_compact_contraction}, $\lim_{k\to\infty}\mc{V}^k$ converges to  $\mc{V}^\star$ in $d_{\mc{K}}$. Therefore, $0\leq d(V^k,\mc{V}^\star)=\inf_{y\in \mc{V}^\star} \norm{V^k-y}_\infty\leq \sup_{x\in \mc{V}^k}\inf_{y\in \mc{V}^\star} \norm{x-y}_\infty\leq d_{\change{\mc{K}}}(\mc{V}^k,\mc{V}^\star)\to 0$ as $k$ tends to $+\infty$. 

Next, for all $k\in\mathbb N$, there exists $N\in\mathbb N$ such that for all $n\geq N$, $d(V^n,\mc{V}^\star)\leq (k+1)^{-1}$. We define the strictly increasing function $\psi_1:\mathbb N\to \mathbb N$, such that $\psi_1(0)=0$ and for all $k\neq 0$, $\psi_1(k):=\min\{N > \psi_1(k-1): \forall n\geq N,\ d(V^n,\mc{V}^\star)<(k+1)^{-1}\}$. Then, for all $k\in\mathbb{N}^\star$, there exists $y^{\psi_1(k)}\in \mc{V}^\star$ such that $\norm{V^{\psi_1(k)}-y^{\psi_1(k)}}_{\change{\infty}}<(k+1)^{-1}$. As $\mc{V}^\star$ is compact, there exists $\psi_2:\mathbb N \to \mathbb N$ strictly increasing such that $(y^{\psi_1(\psi_2(k))})_k$ converges to some $y^\star\in \mc{V}^\star$~\cite[Thm 3.6]{rudin1964principles}. Finally, let $\varepsilon>0$, there exist $K_1,K_2\in \mathbb N$ such that for all $l\geq K_1$, $(\psi_2(l))^{-1}<\varepsilon/2$ and for all $l'\geq K_2$, $\norm{y^{\psi_1(\psi_2(l'))}-y^\star}_{\change{\infty}}<\varepsilon/2$. So, taking $k\geq \max\{K_1,K_2\}$, we have 
$\norm{V^{\psi_1(\psi_2(k))}-y^\star}_{\change{\infty}}\leq \norm{V^{\psi_1(\psi_2(k))}-y^{\psi_1(\psi_2(k))}}_{\change{\infty}}+\norm{y^{\psi_1(\psi_2(k))}-y^\star}_{\change{\infty}}\leq \varepsilon$
and $(V^{\psi_1(\psi_2(k))})_k$ converges to $y^\star\in \mc{V}^\star$.
\qed \end{pf}
In addition to containing all asymptotic behavior of value vector trajectories under time-varying value iteration, the fixed point set $\mc{V}$ also contains all fixed points of the value operator $h(\cdot, C, P)$ when $(C,P) \in \mc{M}$~\eqref{eqn:value_operator} are fixed.  
\begin{cor}
\label{cor:fixed_point_containment}
Let $h$~\eqref{eqn:value_operator} be a value operator on $\reals^S\times\mc{M}$ where $\mc{M}$ is compact. For all $m \in \mc{M}$, if $V=h(V,m) \in \reals^S$ and $\mc{V}^\star$ is the fixed point set of the induced set-based value operator $H$~\eqref{eqn:general_set_operator}, $V\in\mc{V}^\star$.
\end{cor}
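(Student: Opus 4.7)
The plan is to exploit Proposition~\ref{prop:param_varying} with a constant parameter sequence. Given $m \in \mc{M}$ and $V \in \reals^S$ satisfying $V = h(V, m)$, I would set $m^k = m$ for all $k \in \naturals$ in the non-stationary value iteration~\eqref{eqn:time_varying_value_iteration} and initialize $V^0 = V$. The sequence $\{m^k\}_{k\in\naturals}$ is trivially contained in $\mc{M}$, so the hypotheses of Proposition~\ref{prop:param_varying} are satisfied.

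By construction, $V^1 = h(V^0, m^0) = h(V, m) = V$, and inductively $V^k = V$ for every $k \in \naturals$. Proposition~\ref{prop:param_varying} then yields $\lim_{k \to \infty} d(V^k, \mc{V}^\star) = 0$, which specializes to $d(V, \mc{V}^\star) = 0$ since the sequence is constant.

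To conclude $V \in \mc{V}^\star$, I would invoke the compactness of $\mc{V}^\star$ guaranteed by Theorem~\ref{thm:general_compact_contraction} (statement 1, since $\mc{V}^\star \in \mc{K}(\reals^S)$). Compactness implies closedness in $(\reals^S, \norm{\cdot})$, so the infimum in $d(V, \mc{V}^\star) = \inf_{W \in \mc{V}^\star} \norm{V - W}$ is attained; an infimum equal to zero attained on a closed set forces $V$ itself to lie in $\mc{V}^\star$.

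No step in this argument presents a real obstacle: the proof is essentially a two-line specialization of Proposition~\ref{prop:param_varying} to constant parameter sequences, combined with the compactness of the fixed point set. The only conceptual point worth flagging is that the corollary applies uniformly to every $m \in \mc{M}$, so $\mc{V}^\star$ simultaneously contains the entire collection $\{V \in \reals^S : \exists\, m \in \mc{M},\ V = h(V, m)\}$ of ``pointwise'' fixed points of $h$.
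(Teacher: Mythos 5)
Your proof is correct and follows essentially the same route as the paper's: both arguments run the non-stationary iteration~\eqref{eqn:time_varying_value_iteration} with the constant parameter sequence $m^k = m$ started at $V^0 = V$, observe that the resulting sequence is constant, and then appeal to Proposition~\ref{prop:param_varying}. The only (immaterial) difference is that the paper invokes the second statement of that proposition directly, whereas you use the first statement together with the closedness of the compact set $\mc{V}^\star$ to pass from $d(V,\mc{V}^\star)=0$ to $V\in\mc{V}^\star$.
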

\begin{pf}
We construct sequence $\{V^k\}$ where $V^{k+1}=h(V^k, m)$ and $V^0 = V$. Then $V^k = V$ for all $k \in \mathbb{N}$. From the second point of Proposition~\ref{prop:param_varying}, $V \in \mc{V}^\star$ follows. 
\qed \end{pf}
Going further, we can bound the transient behavior of~\eqref{eqn:time_varying_value_iteration} when $V^0$ is an element of the fixed point set $\mc{V}^\star$. 
\begin{cor}[Transient behavior]\label{cor:general_iteration_transient}
Let $\mc{V}^\star$ be the fixed point of the set-based value operator $H$~\eqref{eqn:general_set_operator} induced by $h $ on $\reals^S\times\mc{M}$. If $\mc{M}$ is compact and $V^0 \in \mc{V}^\star$, then the sequence generated by~\eqref{eqn:time_varying_value_iteration} satisfies $\{V^k\}_{k\in\mathbb{N}} \subseteq \mc{V}^\star$.
\end{cor}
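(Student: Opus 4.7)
The plan is to prove this by a straightforward induction on $k$, leveraging the fixed-point property $H(\mc{V}^\star) = \mc{V}^\star$ established in Theorem~\ref{thm:general_compact_contraction} together with the definition~\eqref{eqn:general_set_operator} of the set-based operator $H$. The key observation is that $\mc{V}^\star$ being invariant under $H$ means exactly that applying $h(\cdot, m)$ to any point of $\mc{V}^\star$, for any $m \in \mc{M}$, lands back in $\mc{V}^\star$.

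For the base case, $V^0 \in \mc{V}^\star$ by hypothesis. For the inductive step, I would assume $V^k \in \mc{V}^\star$ and note that since $m^k \in \mc{M}$, the definition~\eqref{eqn:general_set_operator} of $H$ applied to $\mc{V}^\star$ gives
\begin{equation*}
V^{k+1} = h(V^k, m^k) \in \{h(V, m) \mid (V, m) \in \mc{V}^\star \times \mc{M}\} = H(\mc{V}^\star).
\end{equation*}
Combining this with the fixed-point identity $H(\mc{V}^\star) = \mc{V}^\star$ from Theorem~\ref{thm:general_compact_contraction} yields $V^{k+1} \in \mc{V}^\star$, closing the induction.

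There is no real obstacle here — this is a one-line consequence of the definitions once Theorem~\ref{thm:general_compact_contraction} is in hand. The only subtlety worth flagging is that we do not need any assumption on how the sequence $\{m^k\}$ is generated (adversarial, stochastic, arbitrary); the invariance of $\mc{V}^\star$ under every individual $h(\cdot, m)$ with $m \in \mc{M}$ is strong enough that any choice of $m^k \in \mc{M}$ preserves membership. This complements Proposition~\ref{prop:param_varying}, which handles the case $V^0 \notin \mc{V}^\star$ via asymptotic convergence in the point-to-set distance.
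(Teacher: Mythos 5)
Your proof is correct and follows exactly the same route as the paper's: the fixed-point identity $H(\mc{V}^\star)=\mc{V}^\star$ together with the definition of $H$ gives $h(V^k,m^k)\in H(\mc{V}^\star)=\mc{V}^\star$ whenever $V^k\in\mc{V}^\star$, and induction on $k$ finishes the argument. Your remark that no assumption on how $\{m^k\}$ is generated is needed is a fair observation but not a departure from the paper's reasoning.
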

\begin{pf}
As a fixed point set of $H$~\eqref{eqn:general_set_operator}, $\mc{V}^\star$~\eqref{eqn:fixed_point_set} satisfies $\mc{V}^\star = H(\mc{V}^\star)$, then the following is true by definition of $H$: if $V^k \in \mc{V}^\star$, then $V^{k+1} = h(V^k, m^k) \in \mc{V}^\star$. If $V^0 \in \mc{V}^\star$, then $\{V^k\}_{k\in\mathbb{N}} \subseteq \mc{V}^\star$ follows by induction. 
\qed \end{pf}
\begin{rem}
Proposition~\ref{prop:param_varying} and Corollary~\ref{cor:general_iteration_transient} bound the asymptotic and transient behavior of the sequence $\{h(V^k, m^k)\}_{k\in\naturals}$ generated from~\eqref{eqn:time_varying_value_iteration}, regardless of the convergence of the value vector sequence. This is a more general result then the classic convergence results for MDPs and robust MDPs.
\end{rem}
\begin{rem}
Corollary~\ref{cor:general_iteration_transient} also implies that $\mc{V}^\star$ is invariant with respect to the non-stationary value iteration~\eqref{eqn:time_varying_value_iteration}, and may prove useful in the analysis and design of MDPs with known parameter uncertainties.
\end{rem}
\subsection{Bounds of the fixed point set}
In Theorem~\ref{thm:general_compact_contraction}, the compactness of $\mc{M}$ implied the compactness of $\mc{V}^\star$. This relationship carries over to the supremum and infimum elements of $\mc{M}$ and $\mc{V}^\star$---i.e., if $\mc{M}$ satisfies Assumption~\ref{assum:containment_condition} with respect to $h$, then $\mc{V}^\star$ contains its own supremum and infimum elements. 

\textbf{Greatest and least elements}. We define the supremum and infimum elements of a value vector set $\mc{V} \in \mc{K}(\reals^S)$ element-wise as follows, 
\begin{equation}\label{eqn:sup_inf_element_def}
    \overline{V}_s := \sup_{V \in \mc{V}} V_s,\  \underline{V}_s := \inf_{V \in \mc{V}} V_s, \forall \ s\in [S].
\end{equation}
\begin{figure}[ht]
    \centering
    \includegraphics[width=0.9\columnwidth]{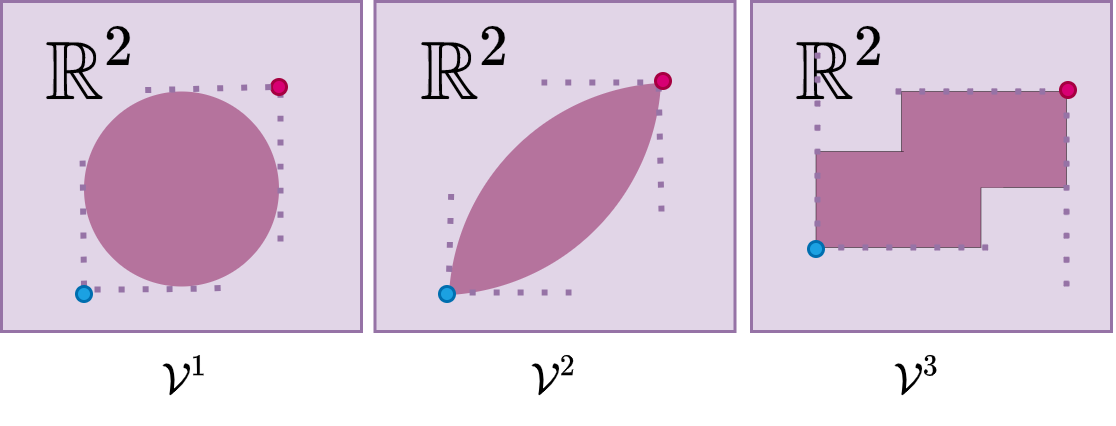}
    \caption{The greatest least bounds of three different value function sets $\mc{V}^i \in \reals^2$, where $(0,0)$ the origin is located on the lower left. Note that $\mc{V}^2$ and $\mc{V}^3$ contains their own greatest and least elements, but $\mc{V}^1$ does not. In $\mc{V}^1$, the coordinate-wise greatest and least elements are achieved by some element in $\mc{V}^1$ but not at the same time. }
    \label{fig:greatest_least_bounds}
\end{figure}

If a set $\mc{V}\subseteq \reals^{S}$ is compact, the projection of $\mc{V}$ on each state $s$ is compact. Then, the coordinate-wise supremum and infimum values for each state $s$ are achieved by $\mc{V}$. However in general, no single element of the set $\mc{V}$ may simultaneously achieve the minimum over all states---i.e., $\overline{V}(\underline{V})$ may not be an element of $\mc{V}$. 
This is illustrated in Figure~\ref{fig:greatest_least_bounds}. 


Given $h$ and parameter uncertainty set $\mc{M}$, we wish to 1) bound the supremum and infimum elements of the fixed point set $\mc{V}^\star$~\eqref{eqn:fixed_point_set} and 2) derive sufficient conditions for when they are elements of $\mc{V}^\star$. 
To facilitate bounding $\mc{V}^\star$, we introduce the following bound operators. 
\begin{defn}[Bound Operators] The bound operators induced by the value operator $h$ on $\reals^S\times\mc{M}$ are coordinate-wise defined at each $s \in [S]$ as 
\begin{equation}\label{eqn:infsup_parameter_contractions}
\underline{h}_s(V) = \inf_{m \in \mc{M}} h_s(V, m), \ \overline{h}_s(V) = \sup_{m\in \mc{M}} h_s(V, m).
\end{equation}
\end{defn}
\begin{figure}[ht]
    \centering
    \includegraphics[width=0.65\columnwidth]{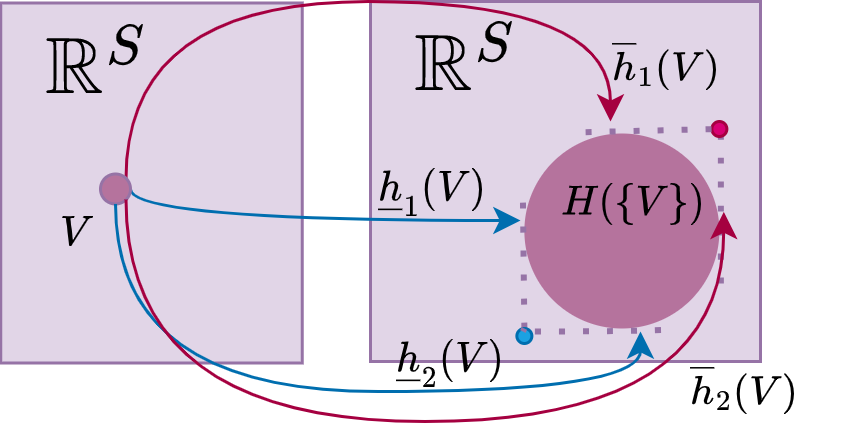}
    \caption{We visualize the bound operator for $H(\mc{V})$ for a given value operator $h$ on $\reals^S\times \mc{M}$. The input set $\mc{V}$ is a singleton $\{V\}$ in $\reals^2$. Here, because $\underline{h}_1$ and $\underline{h}_2$are reached for two different parameters $m \in \mc{M}$, the resulting $\underline{h}(V)$ lies outside of the fixed point set.}
    \label{fig:bound_value_operators}
\end{figure}
We want to bound the fixed point set $\mc{V}$ of the set-based value operator $H$~\eqref{eqn:general_set_operator} by the bound operators $\underline{h}/\overline{h}$~\eqref{eqn:infsup_parameter_contractions}. First we show that $\underline{h}/\overline{h}$ are themselves $\alpha$-contractive and order preserving on $\reals^S$.
\begin{lem}[$\alpha$-Contraction]\label{lem:parametrized_contraction}
If $h$~\eqref{eqn:value_operator} is a value operator on $\reals^{S}\times\mc{M}$ and $\mc{M}$ is compact, 
then $\underline{h}$ and $\overline{h}$~\eqref{eqn:infsup_parameter_contractions} are $\alpha$-contractions with fixed points $\underline{X}, \overline{X}$, respectively. 
\begin{equation}\label{eqn:infsup_fixed_points}
    \textstyle \overline{h}(\overline{X}) = \overline{X}, \quad \underline{h}( \underline{X}) = \underline{X}, \ \underline{X}, \overline{X} \in \reals^S. 
\end{equation}
\end{lem}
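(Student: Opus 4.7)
The plan is to establish $\alpha$-contractivity of $\underline{h}$ and $\overline{h}$ coordinate-wise, then invoke the Banach fixed point theorem on the complete metric space $(\reals^{S}, \norm{\cdot})$. First I would verify that both $\underline{h}(V)$ and $\overline{h}(V)$ are well-defined elements of $\reals^{S}$: by Lemma~\ref{lem:operator_continuity}, for any fixed $V \in \reals^{S}$ the map $m \mapsto h_s(V, m)$ is continuous on the compact set $\mc{M}$, so the infimum and supremum in~\eqref{eqn:infsup_parameter_contractions} are attained for every $s \in [S]$ and the two bound operators map $\reals^{S}$ into $\reals^{S}$.

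Next I would prove the $\alpha$-contraction inequality. Given $V, V' \in \reals^{S}$ and a state $s \in [S]$, I would apply the standard bound
\begin{equation*}
\bigl|\sup_{m\in\mc{M}} h_s(V,m) - \sup_{m\in\mc{M}} h_s(V',m)\bigr| \leq \sup_{m\in\mc{M}} |h_s(V,m) - h_s(V',m)|,
\end{equation*}
which holds because for any $m$, $h_s(V,m) \leq |h_s(V,m) - h_s(V',m)| + h_s(V',m) \leq \sup_{m'}|h_s(V,m') - h_s(V',m')| + \sup_{m'} h_s(V',m')$, and symmetrically. Then, for each fixed $m \in \mc{M}$, property (1) of Definition~\ref{def:value_operator} gives $|h_s(V,m) - h_s(V',m)| \leq \norm{h(V,m) - h(V',m)} \leq \alpha \norm{V - V'}$. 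Combining and maximizing over $s \in [S]$ yields
\begin{equation*}
\norm{\overline{h}(V) - \overline{h}(V')} \leq \alpha \norm{V - V'}.
\end{equation*}
The argument for $\underline{h}$ is identical after replacing $\sup$ by $\inf$ and using $|\inf f - \inf g| \leq \sup|f - g|$.

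Finally, since $(\reals^{S}, \norm{\cdot})$ is a complete metric space and both $\underline{h}$ and $\overline{h}$ are $\alpha$-contractions with $\alpha \in [0,1)$, the Banach fixed point theorem provides the unique fixed points $\underline{X}$ and $\overline{X}$ in $\reals^{S}$ satisfying~\eqref{eqn:infsup_fixed_points}. There is no serious obstacle here; the only subtle step is the sup-difference inequality, which is standard but worth stating explicitly so the chain from $\alpha$-contraction of each $h(\cdot, m)$ to $\alpha$-contraction of $\overline{h}$ is transparent. The compactness of $\mc{M}$ enters only to ensure attainment of the extrema, ensuring $\underline{h}$ and $\overline{h}$ are genuine maps into $\reals^{S}$ rather than extended-real-valued.
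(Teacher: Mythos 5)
Your proof is correct and follows essentially the same route as the paper's: both reduce the contraction bound for $\overline{h}$ and $\underline{h}$ to the uniform $\alpha$-contraction of $h(\cdot,m)$ over $m\in\mc{M}$ and then invoke the Banach fixed point theorem on $(\reals^S,\norm{\cdot})$. The only cosmetic difference is that the paper evaluates the difference at the attained extremizer $\hat{m}(s)$ (using compactness and Lemma~\ref{lem:operator_continuity}), whereas you use the generic inequality $|\sup_m f - \sup_m g|\leq\sup_m|f-g|$, which does not even require the extrema to be attained.
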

\begin{pf}
From Lemma~\ref{lem:operator_continuity}, $h$ is continuous and $\mc{M}$ is compact, then for all $X, Y \in \reals^S$, there exists $\hat{m}(s) \in \mc{M}$ such that $\underline{h}_s(Y) = h_s(Y, \hat{m}(s))$ and $\underline{h}_s(X) \leq h_s(X, \hat{m}(s))$. We upper-bound $\underline{h}_s(X) - \underline{h}_s(Y)$ by $h_s(X, \hat{m}(s)) - h_s(Y, \hat{m}(s))$, and use the $\alpha$-contraction property of $h$ to derive
\begin{align*}
    \underline{h}_s(X) - \underline{h}_s(Y) & \leq |h_s(X, \hat{m}(s)) - h_s(Y, \hat{m}(s))| \\
    & \leq \alpha \norm{X - Y}_\infty.
\end{align*}
Since $X$ and $Y$ are arbitrarily ordered, we conclude that $\norm{\underline{h}(X) - \underline{h}(Y)}_\infty \leq \alpha\norm{X - Y}_\infty$. The proof for $\overline{h}$ follows a similar reasoning and takes $\hat{m}(s) = \change{\argmax}_{m\in\mc{M}} h_s(X, m)$. 
The existence of $\underline{X}(\overline{X})$ follows from applying Banach's fixed point theorem. 
\qed \end{pf}
\begin{lem}[Order Preservation]\label{lem:parameterized_order_preserving}
The bound operators  $\underline{h}$ and $\overline{h}$~\eqref{eqn:infsup_parameter_contractions} are order-preserving on $\reals^S$ (Definition~\ref{def:order_preservation}).
\[\forall \ U, V \in \reals^S, \ U\leq V \Rightarrow \underline{h}(U) \leq \underline{h}(V), \quad \overline{h}(U) \leq \overline{h}(V).\] 
\end{lem}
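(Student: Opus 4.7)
The plan is to reduce the order preservation of the bound operators directly to the order preservation of $h(\cdot, m)$ for each fixed $m \in \mc{M}$, which is part of the definition of a value operator (Definition~\ref{def:value_operator}, property 2). The argument is a standard ``sup/inf commutes with pointwise inequality'' argument, and it does not actually need the compactness of $\mc{M}$ beyond what is already implicit in the bound operators being well-defined.

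Fix $U, V \in \reals^S$ with $U \leq V$ and fix a coordinate $s \in [S]$. For $\overline{h}$, I would observe that for every $m \in \mc{M}$, the order preservation of $h(\cdot, m)$ yields $h_s(U, m) \leq h_s(V, m)$, and then the definition of supremum gives $h_s(V, m) \leq \sup_{m' \in \mc{M}} h_s(V, m') = \overline{h}_s(V)$. Taking the supremum over $m \in \mc{M}$ on the left side preserves the inequality, so $\overline{h}_s(U) \leq \overline{h}_s(V)$. Since $s$ was arbitrary, $\overline{h}(U) \leq \overline{h}(V)$.

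For $\underline{h}$, the argument is symmetric: for every $m \in \mc{M}$, $\underline{h}_s(U) \leq h_s(U, m) \leq h_s(V, m)$, where the first inequality is by definition of the infimum and the second is by the order preservation of $h(\cdot, m)$. Now taking the infimum over $m \in \mc{M}$ on the right yields $\underline{h}_s(U) \leq \underline{h}_s(V)$, and again coordinate-wise quantification gives $\underline{h}(U) \leq \underline{h}(V)$.

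There is essentially no obstacle here; the only thing to be careful about is the direction in which one inserts the sup/inf inequality. Because of this, the proof should be a short two-line derivation for each of the two operators, with no appeal to continuity, compactness, or the contractive/Lipschitz properties.
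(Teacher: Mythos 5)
Your proposal is correct and is essentially the same argument as the paper's: both reduce the claim to the order preservation of $h(\cdot,m)$ for each fixed $m\in\mc{M}$ and then observe that pointwise inequalities survive taking $\inf$ or $\sup$ over $m$. Your version simply spells out the direction of the sup/inf insertion more explicitly than the paper does.
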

\begin{pf}
The lemma statement follows directly from the fact that order preservation is conserved through composition with $\inf$ and $\sup$. If $h(U, m) \leq h(V, m)$, then $\inf_{m\in\mc{M}} h(U,m) \leq \inf_{m\in\mc{M}} h(V, m)$. A similar argument follows for $\overline{h}(\cdot) = \sup_{m\in\mc{M}}h(\cdot, m)$. 
\qed \end{pf}
We show that the fixed points $\underline{X}$ and $\overline{X}$ bounds the fixed point set $\mc{V}^\star$ of the set-based value operator $H$~\eqref{eqn:general_set_operator}.
\begin{thm}[Bounding fixed point sets]\label{thm:bound_containment_inexact}
If $h$~\eqref{eqn:value_operator} is a value operator on $\reals^S\times\mc{M}$ and $\mc{M}$ is compact, 
\begin{equation}\label{eqn:bound_general}
    \underline{X} \leq  V \leq \overline{X},  \ \forall  \ V \in \mc{V}^\star,
\end{equation}
where $\underline{X}$ and $\overline{X}$~\eqref{eqn:infsup_fixed_points} are the fixed points of the bound operators $\underline{h}$ and $\overline{h}$~\eqref{eqn:infsup_parameter_contractions}, and $\mc{V}^\star$ is the fixed point set of the set-based value operator $H$~\eqref{eqn:general_set_operator} induced by $h$~\eqref{eqn:value_operator} on $\reals^S\times\mc{M}$. 
\end{thm}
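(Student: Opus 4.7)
The plan is to reduce the set-valued inequality to a pair of scalar (coordinate-wise) inequalities against the fixed points of $\overline{h}$ and $\underline{h}$, and then use contraction together with order preservation of the bound operators to pass those inequalities to $\overline{X}$ and $\underline{X}$.

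First, since $\mc{V}^\star \in \mc{K}(\reals^S)$ is compact (Theorem~\ref{thm:general_compact_contraction}), the coordinate-wise extrema
\[
\overline{V}_s := \sup_{V\in\mc{V}^\star} V_s,\qquad \underline{V}_s := \inf_{V\in\mc{V}^\star} V_s,\qquad s\in[S],
\]
are finite and attained in each coordinate (they may not themselves lie in $\mc{V}^\star$, but that is irrelevant for the argument). The statement~\eqref{eqn:bound_general} is equivalent to $\underline{X}\leq \underline{V}$ and $\overline{V}\leq \overline{X}$ coordinate-wise, so I would prove those two inequalities.

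The key step is to show $\overline{V}\leq \overline{h}(\overline{V})$ and $\underline{V}\geq \underline{h}(\underline{V})$. Take any $V\in \mc{V}^\star$. Since $\mc{V}^\star = H(\mc{V}^\star)$, there exist $V'\in\mc{V}^\star$ and $m\in\mc{M}$ with $V = h(V',m)$. By the definition of the bound operators~\eqref{eqn:infsup_parameter_contractions}, $h_s(V',m)\leq \overline{h}_s(V')$ for every $s$, and because $V'\leq \overline{V}$ coordinate-wise, order preservation of $\overline{h}$ (Lemma~\ref{lem:parameterized_order_preserving}) gives $\overline{h}_s(V')\leq \overline{h}_s(\overline{V})$. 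Hence $V_s \leq \overline{h}_s(\overline{V})$ for every $V\in\mc{V}^\star$ and every $s$; taking the supremum on the left yields $\overline{V}\leq \overline{h}(\overline{V})$. The symmetric computation, using $h_s(V',m)\geq \underline{h}_s(V') \geq \underline{h}_s(\underline{V})$ and then taking the infimum, yields $\underline{V}\geq \underline{h}(\underline{V})$.

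Finally, I would iterate. Applying the order-preserving operator $\overline{h}$ repeatedly to $\overline{V}\leq\overline{h}(\overline{V})$ produces a nondecreasing sequence
\[
\overline{V}\leq \overline{h}(\overline{V})\leq \overline{h}^2(\overline{V})\leq \cdots \leq \overline{h}^k(\overline{V}).
\]
By Lemma~\ref{lem:parametrized_contraction}, $\overline{h}$ is an $\alpha$-contraction with fixed point $\overline{X}$, so $\overline{h}^k(\overline{V})\to \overline{X}$, and passing to the limit gives $\overline{V}\leq \overline{X}$. The identical argument applied to $\underline{h}$ starting from $\underline{V}\geq \underline{h}(\underline{V})$ produces a nonincreasing sequence converging to $\underline{X}$, giving $\underline{V}\geq \underline{X}$. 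Combining these bounds yields~\eqref{eqn:bound_general}.

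I expect the only subtle step to be the inequality $\overline{V}\leq \overline{h}(\overline{V})$: one must be careful not to assume that $\overline{V}$ itself belongs to $\mc{V}^\star$, and the derivation has to go through a generic element $V=h(V',m)$ of $\mc{V}^\star$ before taking the coordinate-wise supremum. Once that monotonicity of $\overline{V}$ under $\overline{h}$ is in place, the passage to the fixed point $\overline{X}$ is a routine monotone-iteration argument relying on the already-established contraction and order-preservation properties of the bound operators.
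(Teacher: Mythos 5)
Your proof is correct, but it takes a genuinely different route from the paper's. The paper runs the set-based value iteration forward from the two-point seed $\mc{V}^0=\{\underline{X},\overline{X}\}$, shows by induction (using order preservation of $h(\cdot,m)$ and the fixed-point property of $\underline{X},\overline{X}$) that every iterate $\mc{V}^k$ stays inside the order interval $[\underline{X},\overline{X}]$, and then transfers the bound to $\mc{V}^\star$ by extracting, for each $V\in\mc{V}^\star$, a sequence $W^{\phi(n)}\in\mc{V}^{\phi(n)}$ converging to $V$ (Lemma~\ref{basicdh}). You instead work directly with the fixed-point identity $\mc{V}^\star=H(\mc{V}^\star)$ to obtain the sub-/super-fixed-point inequalities $\underline{V}\geq\underline{h}(\underline{V})$ and $\overline{V}\leq\overline{h}(\overline{V})$ for the coordinate-wise extrema of $\mc{V}^\star$, and then push these to $\underline{X}$ and $\overline{X}$ by monotone iteration of the contractive bound operators. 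Your version dispenses entirely with the Hausdorff-limit extraction and the set-valued induction, relying only on Lemmas~\ref{lem:parametrized_contraction} and~\ref{lem:parameterized_order_preserving} together with compactness of $\mc{V}^\star$ from Theorem~\ref{thm:general_compact_contraction}; it is a standard comparison-principle argument and is arguably more elementary. What the paper's route buys in exchange is a little extra information along the way, namely that the whole orbit $\{\mc{V}^k\}$ launched from $\{\underline{X},\overline{X}\}$ remains in the order interval, not just its limit. Your handling of the one subtle point --- that $\overline{V}$ need not belong to $\mc{V}^\star$, so the inequality must be derived through a generic element $h(V',m)$ before taking the coordinate-wise supremum --- is exactly right.
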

\begin{pf} 
For $\mc{V}^0=\{\underline{X},\overline{X}\}$ and $\mc{V}^{k+1} = H(\mc{V}^{k})$~\eqref{eqn:set_based_value_iteration}, we first show 
\begin{equation}\label{eqn:proof_bound_0}
   \textstyle \underline{X} \leq  V \leq \overline{X}, \ \forall \ V \in \mc{V}^k, 
\end{equation}
via induction. Suppose that~\eqref{eqn:proof_bound_0} is satisfied for $\mc{V}^k$. The order preserving property of $h(\cdot, m)$ implies that
$h(\underline{X}, m) \leq  h(V, m)  \leq h(\overline{X}, m)$ holds for all $(V, m) \in \mc{V}^k\times \mc{M}$. We take the infimum and supremum over $h(\underline{X}, m)$ and $h(\overline{X}, m)$, respectively, to show that for all $(V, m) \in \mc{V}^k\times \mc{M}$ and $s \in [S]$,
\[ \inf_{m'\in\mc{M}} h_s(\underline{X}, m') \leq  h_s(V, m) \leq \sup_{m'\in\mc{M}} h_s(\overline{X}, m').\]
Since $\underline{X}$ and $\overline{X}$ are the fixed points of $\inf_{m'\in\mc{M}} h_s(\cdot, m')$ and $\sup_{m'\in\mc{M}} h_s(\cdot, m')$ for all $s \in [S]$, respectively, we conclude that~\eqref{eqn:proof_bound_0} holds for $\mc{V}^{k+1}$. 

Next, we show that $\underline{X}$ and $\overline{X}$ bounds the fixed point set $\mc{V}^\star$ for the $h$-induced operator $H$~\eqref{eqn:general_set_operator}. From Lemma~\ref{basicdh}, we know that for all $V \in \mc{V}^\star$, there exists a strictly increasing sequence $\phi:\mathbb{N}\mapsto\mathbb{N}$ and corresponding value vectors $\{W^{\phi(n)}\}$ such that $\lim_{n\to\infty}W^{\phi(n)} = V$ and $W^{\phi(n)} \in \mc{V}^{\phi(n)}$ for the sequence of value vector sets generated from $\mc{V}^0=\{\underline{X},\overline{X}\}$. Since $\underline{X} \leq W^{\phi(n)}\leq \overline{X}$ holds for all $n$, we conclude~\eqref{eqn:bound_general} holds. 
\qed \end{pf}

\section{Revisiting robust MDP}
We re-examine robust MDP with the set-theoretical analysis in this section, and
show that Assumption~\ref{assum:containment_condition} generalizes the rectangularity assumption made in robust MDPs, thus enabling robust dynamic programming techniques to be available to a wider class of MDP problems and contraction operators. 

Recall the optimistic value vector $W^o \in \reals^S$ and robust value vectors $W^r \in \reals^S$ of a discounted MDP $([S],[A], C, P, \gamma)$ from~\cite{iyengar2005robust,nilim2005robust} as the fixed points of the following operators. \begin{equation}\label{eqn:optimistic_value_function}
    W^o_s = \min_{\pi_s\in\Delta_A}\min_{(C, P) \in \mc{M}} g^\pi_s(W^o, C, P), \forall s \in [S]
\end{equation}
\begin{equation}\label{eqn:robust_value_function}
    W^r_s = \min_{\pi_s\in\Delta_A}\max_{(C, P) \in \mc{M}} g\change{_s}^\pi(W^r, C, P), \forall s \in [S]
\end{equation}
The optimistic policy $\pi^o$ and robust policy $\pi^r$ are the optimal policies corresponding to~\eqref{eqn:optimistic_value_function} and~\eqref{eqn:robust_value_function}, respectively.
\begin{equation}\label{eqn:optimistic_policy}
    \pi^o_s \in \argmin_{\pi_s\in \Delta_A}\min_{(C, P) \in \mc{M}} g_s^\pi(W^o,  C, P), \forall s \in [S]
\end{equation}
\begin{equation}\label{eqn:robust_policy}
    \pi^r_s \in \argmin_{\pi_s\in \Delta_A}\max_{(C, P) \in \mc{M}} g_s^\pi(W^r, C, P), \forall s \in [S]
\end{equation}
For readability, we denote the policy evaluation operator\change{~\eqref{eqn:policy_operator}} under $\pi^o$ as $g^o$ and the policy evaluation operator\change{~\eqref{eqn:policy_operator}} under $\pi^r$ as $g^r$. 

When $\mc{M}$ is $(s,a)$-rectangular~\eqref{eqn:sa_rectangular}, the set of policies satisfying~\eqref{eqn:optimistic_policy} and~\eqref{eqn:robust_policy} are non-empty and includes deterministic policies~\cite[Thm 3.1]{iyengar2005robust}. When $\mc{M}$ is $s$-rectangular and convex, the set of policies satisfying~\eqref{eqn:robust_policy} is non-empty but may be mixed~\cite[Thm 4]{wiesemann2013robust}. When $\mc{M}$ is convex, we show that policies~\eqref{eqn:optimistic_policy} and~\eqref{eqn:robust_policy} exist. 
\begin{prop}\label{prop:containment_to_value_existence}
If the MDP parameter set $\mc{M}$ is compact and convex, then 
\begin{enumerate}
    \item $W^o$~\eqref{eqn:optimistic_value_function} and $W^r$~\eqref{eqn:robust_value_function} exist and satisfy $\overline{f}(W^r) = W^r, \ \underline{f}(W^o) = W^o$, where $\overline{f}$ and $\underline{f}$~\eqref{eqn:infsup_parameter_contractions} are the bound operators of the Bellman operator~\eqref{eqn:bellman_operator}.
    \item $\pi^o$~\eqref{eqn:optimistic_policy} and $\pi^r$~\eqref{eqn:robust_policy} exist. 
\end{enumerate}
\end{prop}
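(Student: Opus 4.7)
The plan is to reduce each defining equation for $W^o$ and $W^r$ to a fixed-point equation of a known contraction, at which point existence follows from Banach's fixed point theorem. The existence of $\pi^o$ and $\pi^r$ will then follow from Weierstrass' extreme value theorem. Throughout, I will work coordinate-wise in $s\in[S]$, exploiting the fact that $g^\pi_s(V, C, P) = c_s^\top \pi_s + \gamma (P_s\pi_s)^\top V$ is linear in $\pi_s$ for fixed $(V, c_s, P_s)$ and linear in $(c_s, P_s)$ for fixed $(V, \pi_s)$.

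For $W^o$, the argument is a straightforward swap of two minima. Writing the definition as $W^o_s = \min_{\pi_s\in\Delta_A}\min_{(C,P)\in\mc{M}} g^\pi_s(W^o, C, P)$ and interchanging the order of minimization yields $W^o_s = \min_{(C,P)\in\mc{M}} f_s(W^o, C, P) = \underline{f}_s(W^o)$. This step only uses compactness of $\mc{M}$ (to ensure the inner minimum is attained), not convexity.

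The hard part will be $W^r$, where a $\min$ and a $\max$ must be interchanged. My plan is to apply Sion's minimax theorem to $(\pi_s, (C, P)) \mapsto g^\pi_s(W^r, C, P)$. The hypotheses hold because (i) $\Delta_A$ is compact and convex, (ii) $\mc{M}$ is compact and convex by assumption, and (iii) $g^\pi_s$ is bilinear in $\pi_s$ and $(c_s, P_s)$, hence jointly continuous and both quasi-convex in $\pi_s$ for each $(C,P)$ and quasi-concave in $(C,P)$ for each $\pi_s$. Sion's theorem then gives $\min_{\pi_s}\max_{(C,P)} g^\pi_s(W^r, C, P) = \max_{(C,P)}\min_{\pi_s} g^\pi_s(W^r, C, P) = \overline{f}_s(W^r)$, so $W^r$ is a fixed point of $\overline{f}$. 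Existence of both $W^o$ and $W^r$ then follows from Banach's fixed point theorem applied to $\underline{f}$ and $\overline{f}$, which are $\gamma$-contractions by Lemma~\ref{lem:parametrized_contraction}.

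For part (2), I would appeal to Weierstrass' theorem. The map $\pi_s \mapsto \min_{(C,P)\in\mc{M}} g^\pi_s(W^o, C, P)$ is continuous on $\Delta_A$ because $g^\pi_s$ is jointly continuous and $\mc{M}$ is compact (a standard maximum-theorem argument); similarly $\pi_s \mapsto \max_{(C,P)\in\mc{M}} g^\pi_s(W^r, C, P)$ is continuous on $\Delta_A$. Since $\Delta_A$ is compact, each objective attains its minimum, so the argmin sets defining $\pi^o_s$ and $\pi^r_s$ in~\eqref{eqn:optimistic_policy} and~\eqref{eqn:robust_policy} are non-empty for every $s\in[S]$, establishing existence of the optimistic and robust policies.
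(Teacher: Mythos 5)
Your proposal is correct and follows essentially the same route as the paper: reduce the defining equation of $W^o$ to the fixed-point equation of $\underline{f}$ by commuting the two minima, reduce that of $W^r$ to the fixed-point equation of $\overline{f}$ via a minimax theorem applied to the bilinear map $(\pi_s,(C,P))\mapsto g^\pi_s(\cdot,C,P)$ over the compact convex sets $\Delta_A$ and $\mc{M}$, invoke Lemma~\ref{lem:parametrized_contraction} with Banach's theorem for existence, and obtain $\pi^o,\pi^r$ from continuity and compactness. The only cosmetic difference is that you cite Sion's minimax theorem where the paper cites von Neumann's; both apply here since $g^\pi_s$ is bilinear.
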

\begin{pf}
Recall the Bellman operator $f$~\change{\eqref{eqn:bellman_operator}}. When $\mc{M}\times\Delta_A$ is compact, the formulation of the fixed point of $\underline{f}$~\eqref{eqn:infsup_parameter_contractions} is equivalently given by
\begin{equation}\label{eqn:prop3_proof_0}
\change{\underline{f}_s}(\underline{X}) = \min_{(C,P)\in \mc{M}} \min_{\pi_s \in \Delta_A} g^\pi_s(\underline{X}, C,P), \ \forall s \in [S].  
\end{equation}
We note that~\eqref{eqn:prop3_proof_0} is identical to
the formulation of $W^o$~\eqref{eqn:optimistic_value_function}. Therefore, $W^o = \underline{X}$ is the fixed point of $\underline{f}$.
When $\mc{M}$ is compact, $W^o$ exists due to Lemma~\ref{lem:parametrized_contraction}. From~\eqref{eqn:optimistic_policy}, $\pi^o_s$ is the optimal argument of $g^\pi_s(W^o, C,P)$, a continuous function in $\pi_s , C, P$ minimized over compact sets $\Delta_A \times \mc{M}$ for all $s \in [S]$. Therefore $\pi^o_s$ exists. Since $\pi^o = (\pi^o_1,\ldots, \pi^o_S)$, the optimal $\pi^o \in\Pi$ exists.

For the robust scenario: when $\mc{M}$ is compact, the fixed point of $\overline{f}$~\eqref{eqn:infsup_parameter_contractions}, $\overline{X}$, exists from Lemma~\ref{lem:parametrized_contraction} and is  given by
\begin{equation}\label{eqn:prop3_proof0}
    \overline{X}_s = \max_{(C, P) \in \mc{M}} \min_{\pi_s\in\Delta_A} g^\pi_s(\overline{X}, C, P), \ \forall s \in [S].
\end{equation} 
The function $g_s^\pi(\overline{X}, C, P)$ is concave in $(C,P)$ and convex in $\pi$. If $\mc{M}$ is convex, then we apply the minimax theorem~\cite{neumann1928theorie} to switch the order of $\min$ and $\max$ in~\eqref{eqn:prop3_proof0} to derive 
\begin{equation}\label{eqn:prop3_proof1}
    \overline{X}_s =  \min_{\pi_s \in \Delta_A}\max_{(C, P) \in \mc{M}} g^\pi_s(\overline{X}, C, P), \ \forall s \in [S].
\end{equation}
Equation~\eqref{eqn:prop3_proof1} is identical to~\eqref{eqn:robust_value_function}, therefore $W^r = \overline{X}$ and exists by Lemma~\ref{lem:parametrized_contraction}. In~\eqref{eqn:prop3_proof1}, $\max_{(C, P) \in \mc{M}}g^\pi_s(\overline{X}, C, P)$ is piece-wise linear in $\pi_s$ and $\Delta_A$ is compact for all $s \in [S]$, thus $\argmin_{\pi_s\in\Delta_A}$ $\max_{(C, P) \in \mc{M}}g^\pi_s(\overline{X}, C, P)$ is non-empty. Finally\change{, } since $\pi^r = (\pi^r_1,\ldots, \pi^r_S)$, $\pi^r$ exists. 
\qed \end{pf}
\begin{rem}
Since $\max_{(C, P) \in \mc{M}}g^\pi_s(\overline{X}, C, P)$ is piecewise linear in $\pi_s$, the optimal $\pi_s^r$ is mixed policy in general. This is consistent with the results in~\cite{wiesemann2013robust}.
\end{rem}
Proposition~\ref{prop:containment_to_value_existence} generalizes the results from~\cite{wiesemann2013robust} to show that~\eqref{eqn:robust_value_function} exists when $\mc{M}$ is compact and convex instead of $s$-rectangular and convex. From Theorem~\ref{thm:bound_containment_inexact}, $W^o$ and $W^r$ are the fixed points of \change{the bound operators $\underline{g}^{\pi^o}$ and $\overline{g}^{\pi_r}$~\eqref{eqn:bound_general}, respectively.} They become infimum and supremum elements when $\mc{M}$ satisfies Assumption~\ref{assum:containment_condition} with respect to $g^o$ and $g^r$. We explicitly derive this result next. First, we introduce some notations: let $G^o = G^{\pi_o}$, the fixed point of $G^o$ be $\mc{V}^o$,  $G^r = G^{\pi^r}$, and the fixed point of $G^r$ be $\mc{V}^r$.
\begin{equation}\label{eqn:optimistic_fixed_point_set}
    \mc{V}^o = \{g^{o}(V, C, P) \ | \ (C, P) \in \mc{M}, V \in \mc{V}^o\},
\end{equation}
\begin{equation}\label{eqn:robust_fixed_point_set}
    \mc{V}^r = \{g^{r}(V, C, P) \ | \ (C, P) \in\mc{M}, V \in \mc{V}^r\}.
\end{equation}
Additionally, the supremum elements of $\mc{V}^o$ and $\mc{V}^r$ are $\overline{V}^o$ and $\overline{V}^r$ respectively and the infimum elements are $\underline{V}^o$ and $\underline{V}^r$, respectively. 
\begin{equation}\label{eqn:robust_bound_defs}
    \underline{V}_s^r = \min_{V \in \mc{V}^r} V_s, \  \overline{V}_s^r  = \max_{V \in \mc{V}_{r}} V_s, \ \forall s \in [S]. 
\end{equation}
\begin{equation}\label{eqn:optimistic_bound_defs}
    \underline{V}_s^o = \min_{V \in \mc{V}^o} V_s, \  \overline{V}_s^o = \max_{V \in \mc{V}^o} V_s, \ \forall s \in [S]. 
\end{equation}
We compare these with the fixed point set of the Bellman operator, $\mc{V}^B =\{\min_{\pi}g^\pi(V, C, P) \ | \ (C, P) \in \mc{M}, V \in \mc{V}^B\}$~\eqref{eqn:fixed_point_set}, denoted by $\overline{V}^B$ and $\underline{V}^B$ as 
\begin{equation}\label{eqn:bellman_bound_defs}
    \underline{V}_s^B = \min_{V \in \mc{V}^B} \change{V_s}, \  \overline{V}_s^B = \max_{V \in \mc{V}^B} V_s, \ \forall s \in [S]. 
\end{equation}
\section{Fixed-point set containing its infimum/supremum}
We make the following assumption on the MDP parameter set $\mc{M}$ with respect to $h$.
\begin{assum}[Containment condition]\label{assum:containment_condition} The MDP parameter set $\mc{M}$ satisfies the containment condition with respect to $h$ if $\mc{M}$ is compact and for all $V \in \reals^S$, 
\begin{equation}\label{eqn:containment_args_condition}
    \bigcap_{s\in [S]} \argmin_{m\in\mc{M}} h_s(V, m) \neq \emptyset, \ \bigcap_{s\in [S]}\argmax_{m\in\mc{M}} h_s(V, m) \neq \emptyset.
\end{equation}
\end{assum}
\begin{figure}[ht]
    \centering
    \includegraphics[width=\columnwidth]{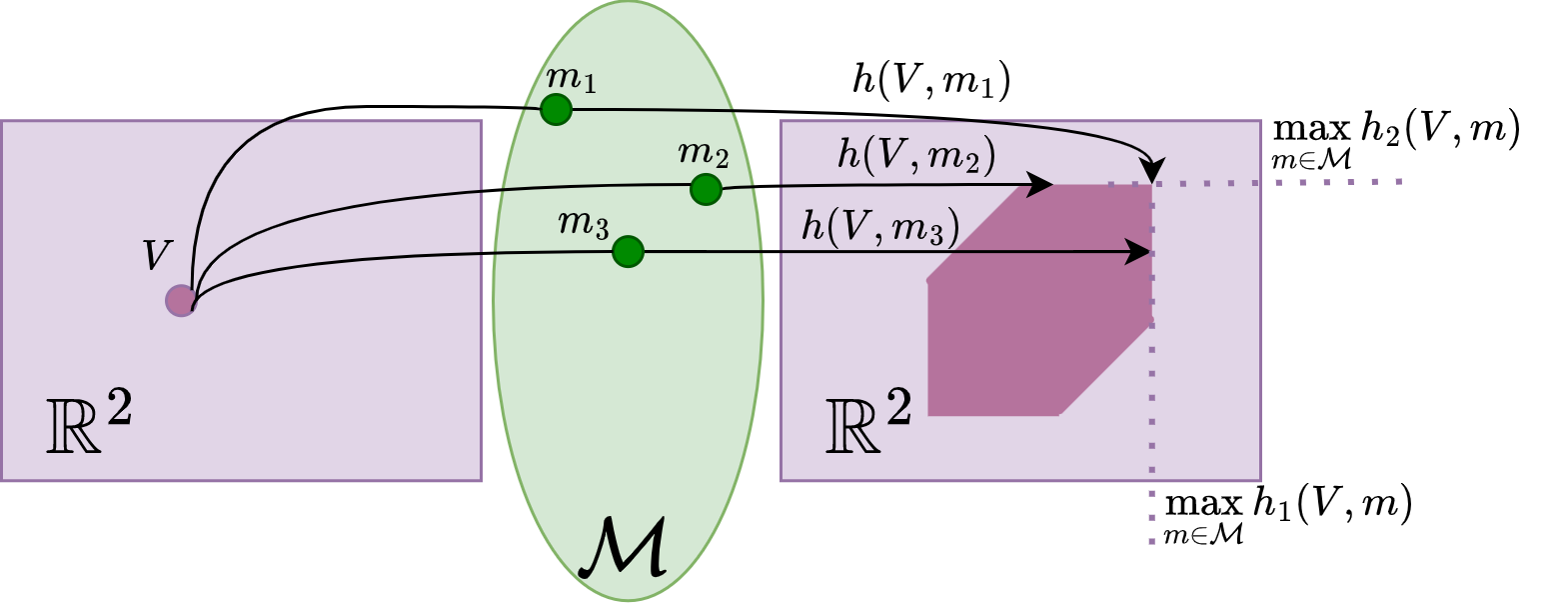}
    \caption{We illustrate $\argmax_{m\in\mc{M}} h_s(V, m)$ for a value operator $h$ when $S =2$. Here, $\argmax_{m\in\mc{M}} h_1(V, m) = \{m_2, m_3\}$, $\argmax_{m\in\mc{M}} h_2(V, m) = \{m_1, m_2\}$. Therefore, $m_2$ is the common parameter that achieves $\max_{m\in\mc{M}} h_s(V,m)$ for all $s \in [S]$.}
    \label{fig:containment_condition}
\end{figure}
\begin{rem}
Assumption~\ref{assum:containment_condition} is an $h$-dependent condition imposed on the structure of $\mc{M}$, and is independent of $\mc{M}$'s convexity and connectivity. 
\end{rem}
\subsection{Containment-satisfying MDP parameter sets}
Assumption~\ref{assum:containment_condition} restricts the structure of $\mc{M}$ with respect to the value operator $h$. Thus whether or not $\mc{M}$ satisfies Assumption~\ref{assum:containment_condition} must always be determined with respect to the operator $h$. With respect to the Bellman operator $f$~\eqref{eqn:bellman_operator} and the policy evaluation operators $g^\pi$~\eqref{eqn:policy_operator}, the following conditions in robust MDP are sufficient to satisfy Assumption~\ref{assum:containment_condition}.
\begin{defn}[$(s,a)$-rectangular sets~\cite{iyengar2005robust,nilim2005robust}]\label{def:sa_rectangular}
The uncertainty set $\mc{M}\subset \reals^{S\times A} \times \Delta^{SA}_S$ is $(s,a)$-rectangular if
\begin{equation}\label{eqn:sa_rectangular}
\mc{M} =  \bigtimes_{(s,a) \in [S]\times [A]} \mc{M}_{sa}, \ \mc{M}_{sa} \subset \reals \times \Delta_S, \ \forall (s,a)\in [S]\times[A].
\end{equation}
\end{defn}
Intuitively, $(s,a)$-rectangularity implies that the MDP parameter uncertainty is \emph{decoupled} between each state-action. A more general condition is if the parameter uncertainty is decoupled between different states but not between different actions within the same state.  
\begin{defn}[$s$-rectangular sets]\label{def:s_rectangular}
The uncertainty set $\mc{M}\subset \reals^{S\times A} \times \Delta^{SA}_S$ is $s$-rectangular if
\begin{equation}\label{eqn:s_rectangular}
\mc{M} = \bigtimes_{s \in [S]} \mc{M}_{s}, \ \mc{M}_{s} \subset \reals^A \times \Delta^A_S, \ \forall s \in [S].
\end{equation}
\end{defn}
$s$-rectangularity generalizes $(s,a)$-rectangularity---i.e. $(s,a)$-rectangularity implies $s$-rectangularity.
\begin{exmp}[Wind uncertainty]\label{ex:wind_uncertainty_2} \change{Consider the navigation problem presented in Example~\ref{ex:wind_uncertainty}.} 
If the wind pattern strictly switches between the discrete wind trends, then the transition uncertainty at state $s\in[S]$ is $\mc{P}_s = \{P^1_s, \ldots, P^N_s\}$. If the wind pattern is a mixture of the discrete wind trends, the transition uncertainty at state $s\in [S]$ is $\mc{P}_s = \{\sum_{i}\alpha_iP^i_{s} \ | \ \alpha \in \Delta_N\}$. Both wind patterns lead to $s$-rectangular uncertainty, given by $\mc{P} = \bigtimes_{s \in [S]} \mc{P}_s$.
\end{exmp}
We show that the rectangularity conditions indeed are sufficient for satisfying Assumption~\ref{assum:containment_condition} with respect to $f$~\eqref{eqn:bellman_operator} and $g^\pi$~\eqref{eqn:policy_operator}.
\begin{prop}
If $\mc{M}$ is compact and $s$-rectangular (Definition~\ref{def:s_rectangular}), $\mc{M}$ satisfies Assumption~\ref{assum:containment_condition} with respect to $f$~\eqref{eqn:bellman_operator} and $g^\pi$~\eqref{eqn:policy_operator} for all $\pi \in \Pi$.
\end{prop}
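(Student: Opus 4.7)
The plan is to exploit the fact that under $s$-rectangularity the minimization/maximization in the containment condition decouples across states, so that a common optimizer can be constructed component-wise.

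First I would make the key structural observation: inspecting the definitions~\eqref{eqn:policy_operator} and~\eqref{eqn:bellman_operator}, both $g^\pi_s(V,C,P) = c_s^\top \pi_s + \gamma(P_s\pi_s)^\top V$ and $f_s(V,C,P) = \inf_{\pi_s\in\Delta_A} c_s^\top\pi_s + \gamma(P_s\pi_s)^\top V$ depend on the MDP parameter $m=(C,P)$ only through its $s$-th component $m_s := (c_s, P_s)$. Hence, writing $h_s(V,m) = h_s(V,m_s)$ with a slight abuse of notation, and using $\mc{M} = \bigtimes_{s\in[S]}\mc{M}_s$ from Definition~\ref{def:s_rectangular}, we have for each fixed $s$,
\begin{equation*}
\argmin_{m\in\mc{M}} h_s(V,m) \;=\; \Bigl(\argmin_{m_s\in\mc{M}_s} h_s(V,m_s)\Bigr) \times \bigtimes_{s'\neq s}\mc{M}_{s'},
\end{equation*}
and analogously for $\argmax$.

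Next I would verify that each factor-wise argmin/argmax is nonempty. Since $\mc{M}$ is compact and $s$-rectangular, each $\mc{M}_s$ is compact (it is the image of $\mc{M}$ under the continuous projection onto the $s$-th factor). By Lemma~\ref{lem:operator_continuity}, $h_s(V,\cdot)$ is continuous in $m$, and therefore continuous as a function of $m_s$ on $\mc{M}_s$. A continuous function on a nonempty compact set attains its extrema, so $\argmin_{m_s\in\mc{M}_s} h_s(V,m_s)$ and $\argmax_{m_s\in\mc{M}_s} h_s(V,m_s)$ are both nonempty.

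Finally I would assemble a common optimizer: for each $s\in[S]$, pick any $m_s^\star\in\argmin_{m_s\in\mc{M}_s} h_s(V,m_s)$, and set $m^\star := (m_1^\star,\ldots,m_S^\star)\in\mc{M}$, which is well-defined by $s$-rectangularity. By the decoupling displayed above, $m^\star \in \argmin_{m\in\mc{M}} h_s(V,m)$ for every $s$, so $m^\star \in \bigcap_{s\in[S]}\argmin_{m\in\mc{M}} h_s(V,m)$, proving the first half of~\eqref{eqn:containment_args_condition}. The identical argument with argmax in place of argmin gives the second half. Since this construction works for both $h=f$ and $h=g^\pi$ (using that both operators have the state-wise decoupling property), the claim follows for all $\pi\in\Pi$.

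There is no real obstacle here; the entire content is the decoupling identity, which is an immediate consequence of $s$-rectangularity plus the observation that $f_s$ and $g^\pi_s$ touch only the $s$-th block of $(C,P)$. The only point worth stating carefully is that compactness of $\mc{M}$ transfers to compactness of each $\mc{M}_s$, which is what lets us guarantee the per-state extrema are attained before we paste them together.
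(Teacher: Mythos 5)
Your proposal is correct and follows essentially the same route as the paper's proof: observe that $f_s$ and $g^\pi_s$ depend only on the $s$-th block $(c_s,P_s)$, use continuity (Lemma~\ref{lem:operator_continuity}) plus compactness of each $\mc{M}_s$ to attain the per-state extrema, and use $s$-rectangularity to paste the per-state optimizers into a single common $m^\star\in\mc{M}$. Your remark that compactness of each $\mc{M}_s$ follows from projecting the compact $\mc{M}$ is a small point the paper leaves implicit, but otherwise the arguments coincide.
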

\begin{pf} We first show that $\mc{M}$ satisfies Assumption~\ref{assum:containment_condition} with respect to the Bellman operator. Given $s \in [S]$, $f_s(V, C, P)$ only depends on the $s$ component of $C$ and $P$. From Lemma~\ref{lem:operator_continuity}, $f_s$ is continuous in $(c_s, P_s)$. Let $(c_s^\star, P_s^\star)$ be the solution to $\argmin_{(c_s, P_s) \in \mc{M}_s} f_s(V, C, P)$ for all $\forall \ s\in [S]$. If $\mc{M}_s$ is compact, $(c_s^\star, P^\star_s) \in \mc{M}_s$. We can construct $C^\star  = [c_1^\star, \ldots, c^\star_S]$ and $P^\star = [P_1^\star,\ldots, P_S^\star]$. If $\mc{M}$ is $s$-rectangular, then $(C^\star, P^\star) \in \mc{M}$ and $(C^\star, P^\star) \in \argmin_{\change{(C,P)}\in\mc{M}} f_s(V, C, P) $ for all $s \in [S]$. We conclude that $\mc{M}$ satisfies Assumption~\ref{assum:containment_condition}.

Given $\pi \in \Pi$ and $s \in [S]$, $g^\pi_s$ only depends on $c_s$ and $P_s$ as well. We can similarly show that there exists an optimal parameter $(C^\star, P^\star) \in \argmin_{(C, P) \in \mc{M}} g^\pi_s(V, C, P)$ for all $s \in [S]$ such that $(C^\star, P^\star) \in \mc{M}$. \qed
\end{pf}
Beyond $s$-rectangularity, there are sets that satisfy Assumption~\ref{assum:containment_condition} with respect to specific value operators.
\begin{exmp}[Beyond rectangularity]
\begin{figure}
    \centering
    \includegraphics[width=0.6\columnwidth]{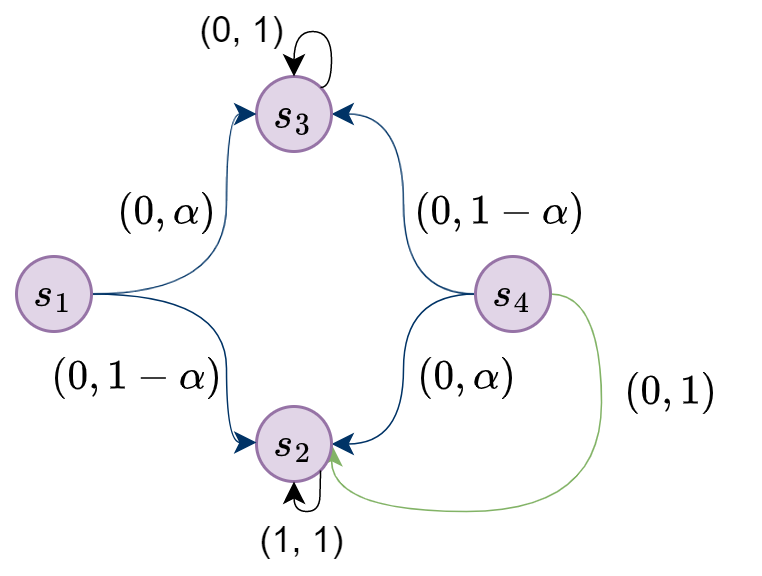}
    \caption{MDP with parameter coupling in transition probability across different states.}
    \label{fig:containment_general}
\end{figure}
In Figure~\ref{fig:containment_general}, we visualize a four state MDP with transition uncertainty $\mc{M}$ parameterized by $\alpha$. MDP states are the nodes and MDP actions are the arrows. Actions that transition to multiple states are visualized by multi-headed arrows. Each head has an associated tuple $(c_{sa}, p_{sa,s'})$ denoting its state-action cost and transition probability. All states have a single action except for state $s_4$, where two actions exist and are distinguished by different colors. Both $s_2$ and $s_3$ are absorbing states with a unique action, such that $V_2 = \frac{1}{1 - \gamma}$ and $V_3 = 0$ for both $f$ and $g^\pi$ for all $\pi \in \Pi$, where $\gamma$ is the discount factor. 

The states $s_1$ and $s_4$ have transition uncertainty parametrized by $\alpha \in [0, 1]$. Therefore, $\mc{M}$ violates $s$-rectangularity (Definition~\ref{def:s_rectangular}). The optimal cost-to-go values $V_1$ and $V_4$ occur at different $\alpha$'s. Therefore, $\mc{M}$ violates Assumption~\ref{assum:containment_condition} with respect to $f$. However, suppose that at $s_4$,  we only consider policies that exclusively choose the action  colored green in Fig.~\ref{fig:containment_general}. Then the expected cost-to-go at $s_4$, $V_4$, is independent of $\alpha$. The minimum and maximum values of $V_1$ under $\pi$ occur at $\alpha = 1$ and $\alpha = 0$, respectively. Therefore, $\mc{M}$ satisfies Assumption~\ref{assum:containment_condition} with respect to operator $g^\pi$ for all $\pi = [\pi_{s_1},\ldots, \pi_{s_4}]$ where $\pi_{s_4} =[1,0]$.
\end{exmp}
When Assumption~\ref{assum:containment_condition} is satisfied, the fixed point of $H$~\eqref{eqn:general_set_operator} contains its own supremum and infimum. 
\begin{thm}\label{thm:bound_containment}
If $h$~\eqref{eqn:value_operator} on $\reals^S\times \mc{M}$ satisfies Assumption~\ref{assum:containment_condition}, then there exists $\underline{m}, \overline{m} \in \mc{M}$ such that $\underline{h}$ and $\underline{h}$~\eqref{eqn:infsup_parameter_contractions} and their fixed points $\underline{X}$ and $\overline{X}$~\eqref{eqn:infsup_fixed_points} satisfies
\begin{equation}\label{eqn:containement_optimal_parameter_existence}
    \underline{h}(\underline{X}) = h(\underline{X}, \underline{m}) = \underline{X}, \ \overline{h}(\overline{X}) = h(\overline{X}, \overline{m}) = \overline{X}. 
\end{equation}
Additionally, $\underline{X}$ and $\overline{X}$ are the least and the greatest elements of $H$'s fixed point set $\mc{V}^\star$, $\underline{V}^\star, \overline{V}^\star$~\eqref{eqn:sup_inf_element_def} respectively, and both belong to $\mc{V}^\star$~\eqref{eqn:fixed_point_set}.
\[\textstyle\underline{X} = \underline{V}^\star, \ \overline{X} = \overline{V}^\star, \ \underline{X}, \overline{X} \in \mc{V}^\star.\]
\end{thm}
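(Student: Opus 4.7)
The plan is to split the statement into three stages: (i) produce the witnesses $\underline{m},\overline{m}\in\mc{M}$ and establish the fixed point identities in~\eqref{eqn:containement_optimal_parameter_existence}; (ii) deduce from (i) that $\underline{X},\overline{X}\in\mc{V}^\star$; and (iii) upgrade the bounds~\eqref{eqn:bound_general} of Theorem~\ref{thm:bound_containment_inexact} to the claim that $\underline{X}$ and $\overline{X}$ are the least and greatest elements of $\mc{V}^\star$.

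For stage (i), I would first invoke Lemma~\ref{lem:parametrized_contraction}: since $\mc{M}$ is compact and $h$ is a value operator, the bound operators $\underline{h},\overline{h}$ are $\alpha$-contractions on $\reals^S$ with fixed points $\underline{X},\overline{X}\in\reals^S$. I would then specialize Assumption~\ref{assum:containment_condition} to the particular value vectors $V=\underline{X}$ and $V=\overline{X}$. This gives some $\underline{m}\in\bigcap_{s\in[S]}\argmin_{m\in\mc{M}} h_s(\underline{X},m)$ and some $\overline{m}\in\bigcap_{s\in[S]}\argmax_{m\in\mc{M}} h_s(\overline{X},m)$. By the definition of these intersections, for every $s\in[S]$ we have $h_s(\underline{X},\underline{m})=\inf_{m\in\mc{M}} h_s(\underline{X},m)=\underline{h}_s(\underline{X})$, i.e.\ $h(\underline{X},\underline{m})=\underline{h}(\underline{X})$. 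Using the fixed point identity $\underline{h}(\underline{X})=\underline{X}$ from Lemma~\ref{lem:parametrized_contraction} then yields $h(\underline{X},\underline{m})=\underline{X}$. The argument for $\overline{m}$ is symmetric.

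For stage (ii), I would apply Corollary~\ref{cor:fixed_point_containment}: since $\underline{X}=h(\underline{X},\underline{m})$ with $\underline{m}\in\mc{M}$, the corollary places $\underline{X}\in\mc{V}^\star$; likewise $\overline{X}\in\mc{V}^\star$. For stage (iii), Theorem~\ref{thm:bound_containment_inexact} already gives $\underline{X}\leq V\leq\overline{X}$ for every $V\in\mc{V}^\star$, so combined with $\underline{X},\overline{X}\in\mc{V}^\star$ this forces, coordinate-wise, $\underline{V}^\star_s=\inf_{V\in\mc{V}^\star}V_s=\underline{X}_s$ and $\overline{V}^\star_s=\sup_{V\in\mc{V}^\star}V_s=\overline{X}_s$ for every $s\in[S]$, i.e.\ $\underline{X}=\underline{V}^\star$ and $\overline{X}=\overline{V}^\star$, and both witnesses lie in $\mc{V}^\star$.

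The proof is essentially an assembly of existing machinery; there is no serious analytic difficulty once Lemma~\ref{lem:parametrized_contraction}, Corollary~\ref{cor:fixed_point_containment}, and Theorem~\ref{thm:bound_containment_inexact} are in hand. The only conceptual point to watch is the logical direction in stage (i): one must instantiate the containment condition at the fixed point $\underline{X}$ of $\underline{h}$ (not at an arbitrary $V$), so that the common minimizer $\underline{m}$ realizes $h(\underline{X},\underline{m})=\underline{h}(\underline{X})$ and the fixed point equation for $\underline{h}$ then closes the loop. The main ``obstacle'' is thus purely notational bookkeeping, ensuring that the same $\underline{m}$ serves simultaneously as the argmin witness for all coordinates $s$ and as a parameter that produces a true fixed point of $h(\cdot,\underline{m})$.
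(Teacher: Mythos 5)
Your proposal is correct and follows essentially the same route as the paper's proof: instantiate Assumption~\ref{assum:containment_condition} at the fixed points $\underline{X},\overline{X}$ of the bound operators to obtain the common optimizers $\underline{m},\overline{m}$, invoke Corollary~\ref{cor:fixed_point_containment} for membership in $\mc{V}^\star$, and combine with the bounds from Theorem~\ref{thm:bound_containment_inexact} to conclude extremality. Your write-up is in fact slightly more careful than the paper's (which contains a typo writing $\min$ where $\max$ is meant for $\overline{m}$), but the argument is the same.
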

\begin{pf} 
From Theorem~\ref{thm:bound_containment_inexact}, $\underline{X}$ and $\overline{X}$ are the lower and upper bounds on the fixed point set $\mc{V}^\star$. We show that these are the infimum and supremum elements of $\mc{V}^\star$ by showing that they are also elements of $\mc{V}^\star$.
From Assumption~\ref{assum:containment_condition}, there exists $\underline{m}, \overline{m} \in \mc{M}$ such that ${h}_s(\underline{X}, \underline{m}) = \min_{m \in \mc{M}} h_s(\underline{X}, \underline{m})$ and ${h}_s(\overline{X}, \overline{m}) = \min_{m \in \mc{M}} h_s(\overline{X}, \overline{m})$ for all $s \in [S]$. Since $\underline{X}$ and $\overline{X}$ are fixed points of $h(\cdot, \underline{m})$ and $h(\cdot, \overline{m})$, we apply Corollary~\ref{cor:fixed_point_containment} to conclude that $\underline{X},\overline{X} \in \mc{V}^\star$.   
\qed \end{pf}
Our next result proves the relationship between $\underline{V}^B, \underline{V}^o, \underline{V}^r,$ $\overline{V}^B, \overline{V}^o, \overline{V}^r$ when $f, g^o$, and $g^r$ on $\reals^{S}\times \mc{M}$ satisfy Assumption~\ref{assum:containment_condition}.
\begin{thm} \label{thm:robust_control_set_bounds}
If $f, g^o, g^r$ satisfy Assumption~\ref{assum:containment_condition} on $\reals^{S}\times \mc{M}$, then the bounding elements~\eqref{eqn:bellman_bound_defs}~\eqref{eqn:optimistic_bound_defs}~\eqref{eqn:robust_bound_defs} of the corresponding fixed point sets $\mc{V}^B$,$\mc{V}^o$~\eqref{eqn:optimistic_fixed_point_set} and $\mc{V}^r$~\eqref{eqn:robust_fixed_point_set} are ordered as
\begin{equation}\label{eqn:robust_bounds_ordered}
    \underline{V}^B=\underline{V}^o\leq \underline{V}^r,\  \overline{V}^B=\overline{V}^r\leq \overline{V}^o.
\end{equation}
\end{thm}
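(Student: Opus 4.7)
The plan is to identify each of the six extremal elements with the fixed point of an appropriate bound operator, and then to chain the two equalities and two inequalities by monotone iteration of $\alpha$-contractions. First I would apply Theorem~\ref{thm:bound_containment} separately to the three value operators $f,\,g^o,\,g^r$; the containment hypothesis gives that $\underline{V}^B,\overline{V}^B,\underline{V}^o,\overline{V}^o,\underline{V}^r,\overline{V}^r$ are the (unique, by Lemma~\ref{lem:parametrized_contraction}) fixed points of the six bound operators $\underline{f},\overline{f},\underline{g}^o,\overline{g}^o,\underline{g}^r,\overline{g}^r$ respectively, and each is itself attained in the corresponding fixed-point set.

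For the two equalities, the plan is to identify both sides with the optimistic and robust value vectors $W^o,W^r$ from~\eqref{eqn:optimistic_value_function}--\eqref{eqn:robust_value_function}. Because $\pi^o$ attains the outer argmin in~\eqref{eqn:optimistic_policy}, one has $W^o=\min_{(C,P)\in\mc{M}} g^{\pi^o}(W^o,C,P)=\underline{g}^o(W^o)$, forcing $W^o=\underline{V}^o$ by uniqueness; commuting the two minima also gives $W^o=\underline{f}(W^o)$, so $W^o=\underline{V}^B$, proving $\underline{V}^B=\underline{V}^o$. Symmetrically, $\pi^r$ attains the argmin of the robust program, so $W^r=\max_{(C,P)\in\mc{M}} g^{\pi^r}(W^r,C,P)=\overline{g}^r(W^r)$, which yields $W^r=\overline{V}^r$; the identification $W^r=\overline{V}^B$ is precisely the content of Proposition~\ref{prop:containment_to_value_existence}, where $\min_\pi$ and $\max_{(C,P)}$ are commuted in the fixed-point equation for $\overline{f}$ via a minimax argument.

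For the two inequalities, I would start from the already-identified fixed point on the left and exploit the order-preserving contraction structure from Lemmas~\ref{lem:parametrized_contraction} and~\ref{lem:parameterized_order_preserving}. For $\underline{V}^B\le\underline{V}^r$, the pointwise bound $\underline{f}(V)=\min_{(C,P)}\min_\pi g^\pi(V,C,P)\le\min_{(C,P)}g^{\pi^r}(V,C,P)=\underline{g}^r(V)$ holds trivially since $\pi^r$ is one admissible choice of $\pi$; at $V=\underline{V}^B$ this yields $\underline{V}^B\le\underline{g}^r(\underline{V}^B)$, and iterating the order-preserving contraction $\underline{g}^r$ produces a non-decreasing sequence converging to $\underline{V}^r$. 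For $\overline{V}^r\le\overline{V}^o$, I would evaluate at $V=W^r$: the minimax definition of $\pi^r$ gives coordinate-wise $\max_{(C,P)}g^{\pi^r}_s(W^r,C,P)\le\max_{(C,P)}g^{\pi^o}_s(W^r,C,P)$, i.e.\ $\overline{V}^r=\overline{g}^r(W^r)\le\overline{g}^o(W^r)$, and iterating $\overline{g}^o$ from $W^r$ then gives $\overline{V}^r\le\overline{V}^o$.

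The only non-bookkeeping step is the identification $\overline{V}^B=W^r$: the fixed-point equation for $\overline{f}$ has the form $\max_{(C,P)}\min_\pi$, whereas $W^r$ is defined by $\min_\pi\max_{(C,P)}$, so a minmax interchange is required. This is the one place where the convexity ingredient entering Proposition~\ref{prop:containment_to_value_existence} is indispensable; once it is granted, every remaining step reduces to the standard Banach-style sandwich---iterate an order-preserving $\alpha$-contraction $T$ from a seed $V$ with $V\le T(V)$ (resp.\ $V\ge T(V)$) and invoke uniqueness of the fixed point---so no further subtlety arises.
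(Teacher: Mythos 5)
Your treatment of $\underline{V}^B=\underline{V}^o$ (commuting the two unconstrained minima), of $\underline{V}^B\le\underline{V}^r$ (seed $\underline{V}^B\le\underline{g}^r(\underline{V}^B)$ and iterate the order-preserving contraction), and of $\overline{V}^r\le\overline{V}^o$ all match the paper's argument or are harmless variants of it. The gap is in the identification $\overline{V}^B=W^r$, which you route through Proposition~\ref{prop:containment_to_value_existence} and explicitly flag as requiring the minimax interchange and hence convexity of $\mc{M}$. But convexity is \emph{not} a hypothesis of Theorem~\ref{thm:robust_control_set_bounds}: the theorem assumes only that $f,g^o,g^r$ satisfy Assumption~\ref{assum:containment_condition}, and the remark following that assumption states explicitly that the containment condition is independent of $\mc{M}$'s convexity. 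For non-convex $\mc{M}$ the von Neumann interchange $\max_{(C,P)}\min_\pi=\min_\pi\max_{(C,P)}$ can fail, so your proof as written establishes the conclusion only under a strictly stronger hypothesis.

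The paper closes this without any minimax theorem, by a two-sided sandwich that uses the containment condition itself. For $\overline{V}^B\le\overline{V}^r$: the one-sided min-max \emph{inequality} $\max_{(C,P)}\min_\pi\le\min_\pi\max_{(C,P)}$ always holds, giving $\overline{V}^B=\overline{f}(\overline{V}^B)\le\overline{g}^r(\overline{V}^B)$, and iterating $\overline{g}^r$ yields $\overline{V}^B\le\overline{V}^r$ (this direction is essentially the seed-and-iterate step you already use elsewhere). For the reverse direction $\overline{V}^r\le\overline{V}^B$, Assumption~\ref{assum:containment_condition} supplies a \emph{single} parameter $(\overline{C},\overline{P})\in\mc{M}$ achieving the coordinate-wise maximum simultaneously in every state, so that $\overline{V}^r=\min_{\pi}g^\pi(\overline{V}^r,\overline{C},\overline{P})$ is a genuine fixed point of the Bellman operator at one fixed parameter in $\mc{M}$; Corollary~\ref{cor:fixed_point_containment} then places $\overline{V}^r\in\mc{V}^B$, whence $\overline{V}^r\le\overline{V}^B$. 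You should replace the appeal to Proposition~\ref{prop:containment_to_value_existence} with this argument; everything else in your proposal stands.
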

\begin{pf}
Since $\underline{V}^o$ is the infimum element for the fixed point set $\mc{V}^o$~\eqref{eqn:optimistic_bound_defs}, we  can apply Theorem~\ref{thm:bound_containment} to derive
\begin{equation}\label{eqn:thm4_proof_0}
\textstyle\underline{V}^o = \min_{(C, P) \in \mc{M}} g^{o}(\underline{V}^o, C, P). 
\end{equation} 
By definition of \change{$\pi^o$}~\eqref{eqn:optimistic_policy}, $\min_{(C, P) \in \mc{M}} g^{o}(\underline{V}^o, C,P)= \min_{(C, P) \in \mc{M}} \min_{\pi\in \Pi}g^{\pi}(\underline{V}^o, C,P)$. As the two minima commute, 
\begin{equation}\label{eqn:thm4_proof_1}
 \min_{(C, P) \in \mc{M}} g^{o}(\underline{V}^o, C,P)=\min_{(C, P) \in \mc{M}} \min_{\pi\in \Pi} g^{\pi}(\underline{V}^o, C,P).   
\end{equation}
Combining~\eqref{eqn:thm4_proof_0} and~\eqref{eqn:thm4_proof_1}, $\underline{V}^o$ is exactly the unique fixed point of  $\min_{(C, P) \in \mc{M}} \min_{\pi\in \Pi} g^{\pi}(\cdot, C,P)$.  However, by applying Theorem~\ref{thm:bound_containment} to $f$ on $\reals^S\times\mc{M}$, $\underline{V}^B$ is also the unique fixed point of $\min_{(C, P) \in \mc{M}} \min_{\pi\in \Pi} g^\pi(\cdot, C,P)$. Therefore $\underline{V}^o = \underline{V}^B$. 

From~\eqref{eqn:robust_bound_defs}, $\underline{V}^r=\min_{(C, P)\in \mc{M}} g^{r}(\underline{V}^r, C,P)$, we can minimize over the policy space to lower bound $\underline{V}^r$ as 
\begin{equation}\label{eqn:thm4_proof_2}
\underline{V}^r\geq  \min_{\pi\in\Pi} \min_{(C, P)\in \mc{M}}g^{r}(\underline{V}^r, C, P).
\end{equation}
Since the right hand side of~\eqref{eqn:thm4_proof_2} is equivalent to $\underline{f}(\underline{V}^r)$,~\eqref{eqn:thm4_proof_2} is equivalent to $\underline{V}^r \geq \underline{f}(\underline{V}^r)$.
From Lemma~\ref{lem:parameterized_order_preserving}, $\underline{f}$ is order-preserving in $V$, we conclude that $\underline{V}^o = \underline{V}^\star \leq \underline{V}^r$.

From Theorem~\ref{thm:bound_containment}, $\overline{V}^r$ is the fixed point of \change{$\overline{g}^r$}, such that  
\begin{equation}\label{eqn:thm4_proof_3}
    \overline{V}^r = \max_{(C, P)\in \mc{M}} g^{r}(\overline{V}^r,  C, P).
\end{equation}
We apply $\min_{\pi}$ to both sides of~\eqref{eqn:thm4_proof_3} and use the definition of \change{$\pi^r$} to derive that $\overline{V}^r$ is the fixed point of $\min_{\pi\in\Pi} \max_{(C,P)\in\mc{M}} g^{\pi}(V^r,  C, P)$. From Assumption~\ref{assum:containment_condition}, there exists $(\overline{C}, \overline{P}) \in \mc{M}$ that maximizes $g^{\pi}(\overline{V}, C, P)$, so $\overline{V}^r$ equivalently satisfies
\[\overline{V}^r = \min_{\pi\in\Pi}g^\pi(\overline{V}^r,\overline{C}, \overline{P}). \]
From Corollary~\ref{cor:fixed_point_containment}, this implies that $\overline{V}^r \in \mc{V}^B$ and therefore $\overline{V}^r \leq \overline{V}^B$.
Next we show $\overline{V}^B \leq \overline{V}^r$. From Theorem~\ref{thm:bound_containment}, $\overline{V}^B$ is the fixed point of $\overline{f}$, such that $\overline{V}^B = \max_{(C,P)\in\mc{M}} \min_\pi g^{\pi}(\overline{V}^B, C, P)$,
From  the min-max inequality, 
\[\overline{V}^B \leq  \min_{\pi\in\Pi} \max_{(C,P)\in\mc{M}} g^{\pi}(\overline{V}^B, C, P).\] 
Since \change{$\pi^r \in \Pi$}, 
\begin{equation}\label{eqn:thm4_proof_4}
   \overline{V}^B \leq \max_{(C,P) \in \mc{M}} g^{r}(\overline{V}^B, C, P). 
\end{equation}
The right-hand side of~\eqref{eqn:thm4_proof_4} is  $\overline{g}^{r}(\overline{V}^B)$~\eqref{eqn:infsup_parameter_contractions}, such that~\eqref{eqn:thm4_proof_4} is equivalent to $\overline{V}^B \leq \overline{g}^{r}(\overline{V}^B)$. We consider the sequence $V^{k+1} = \overline{g}^r(V^k)$ where $V^1 = \overline{V}^B$. Since $\overline{g}^r$ is a contraction, $\lim_{k\to \infty}V^k = V^r$, the fixed point of $\overline{g}^r$. From Lemma~\ref{lem:parameterized_order_preserving}, $\overline{g}^r$ is order preserving. Therefore $\overline{V}^B = V^1 \leq V^r$. 

Finally, Theorem~\ref{thm:bound_containment} implies that $\overline{V}^o$ is the fixed point of $\overline{g}^{o}$: $\overline{V}^o = \max_{(C,P)\in\mc{M}} g^{o}(\overline{V}^o, C, P)$.
By construction, $\overline{V}^o \geq \min_{\pi\in\Pi}\max_{(C,P)\in\mc{M}} g^\pi(\overline{V}^o, C, P)$.
From the min-max inequality, \[\min_{\pi\in\Pi}\max_{(C,P)\in\mc{M}} g^\pi(\overline{V}^o, C, P)\geq\max_{(C,P)\in\mc{M}}\min_{\pi\in\Pi} g^\pi(\overline{V}^o,  C, P),\]
such that the right hand side of the inequality is equivalent to $\overline{f}(\overline{V}^o)$. Following the monotonicity properties of the Bellman operator $f$~\cite[Thm.6.2.2]{puterman2014markov}, we conclude that $\overline{V}^o \geq \overline{V}^B$.  
\qed \end{pf}
\begin{rem}\label{rem:robust_low_variance} Through our fixed-point analysis, we see that in addition to having the best worst-case performance among $\{\mc{V}^o, \mc{V}^B, \mc{V}^r\}$,  $\mc{V}^r$ also has the smallest variation in performance for the same uncertainty set $\mc{M}$.
\end{rem}
Finally, we generalize the $s$-rectangularity condition by showing that the optimistic and robust policies exist  when the MDP parameter set $\mc{M}$ satisfies Assumption~\ref{assum:containment_condition}. 
\begin{cor}[Robust MDP under Assumption~\ref{assum:containment_condition}]
If $\mc{M}$ is compact and convex, and $f, g^o, g^r$ satisfy Assumption~\ref{assum:containment_condition} on $\reals^S\times \mc{M}$, then $W^o$~\eqref{eqn:optimistic_value_function} and $W^r$~\eqref{eqn:robust_value_function} are the infimum and supremum value vectors for the policy evaluation operator under $\pi^o$~\eqref{eqn:optimistic_policy} and $\pi^r$~\eqref{eqn:robust_policy}, respectively. 
\begin{equation}
    W^o_s = \inf_{V \in \mc{V}^o} \change{V_s}, W^r_s = \sup_{V\in\mc{V}^r} \change{V_s}, \ \forall s \in [S],
\end{equation}
where $\mc{V}^o$~\eqref{eqn:optimistic_fixed_point_set} and $\mc{V}^r$~\eqref{eqn:robust_fixed_point_set} are the fixed point sets of policies $\pi^o$ and $\pi^r$ under parameter uncertainty $\mc{M}$, respectively. 
\end{cor}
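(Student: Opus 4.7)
The plan is to chain together three previously established results so that $W^o$ and $W^r$ are identified, step by step, with the extremal elements of the fixed point sets $\mc{V}^o$ and $\mc{V}^r$. First, since $\mc{M}$ is compact and convex, Proposition~\ref{prop:containment_to_value_existence} gives us that $W^o$ and $W^r$ both exist and are the fixed points of the bound operators $\underline{f}$ and $\overline{f}$ of the Bellman operator $f$, i.e. $W^o = \underline{X}$ and $W^r = \overline{X}$ in the notation of~\eqref{eqn:infsup_fixed_points}. The corresponding optimistic and robust policies $\pi^o$, $\pi^r$ are likewise guaranteed to exist by the same proposition, so that the induced operators $g^o$ and $g^r$ are well defined.

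Next, because $f$ satisfies Assumption~\ref{assum:containment_condition} on $\reals^S\times\mc{M}$, Theorem~\ref{thm:bound_containment} applied to $f$ tells us that $\underline{X} = \underline{V}^B$ and $\overline{X} = \overline{V}^B$ and that both belong to the Bellman fixed point set $\mc{V}^B$. Then, since $f$, $g^o$ and $g^r$ all satisfy Assumption~\ref{assum:containment_condition}, Theorem~\ref{thm:robust_control_set_bounds} applies and yields $\underline{V}^B = \underline{V}^o$ and $\overline{V}^B = \overline{V}^r$. Composing these identifications gives the desired chains
\[
W^o \;=\; \underline{X} \;=\; \underline{V}^B \;=\; \underline{V}^o, \qquad W^r \;=\; \overline{X} \;=\; \overline{V}^B \;=\; \overline{V}^r,
\]
matching exactly the coordinate-wise definitions in~\eqref{eqn:optimistic_bound_defs} and~\eqref{eqn:robust_bound_defs}.

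To close the argument I would also verify that $\underline{V}^o$ and $\overline{V}^r$ are genuinely attained inside $\mc{V}^o$ and $\mc{V}^r$, so that the coordinate-wise infima and suprema in the corollary's statement are well-posed and coincide with the greatest and least set elements. This is handled by a second invocation of Theorem~\ref{thm:bound_containment}, now applied directly to $g^o$ and $g^r$: the containment condition produces witnesses $\underline{m}^o,\overline{m}^r\in\mc{M}$ such that $\underline{V}^o$ and $\overline{V}^r$ are fixed points of $g^o(\cdot,\underline{m}^o)$ and $g^r(\cdot,\overline{m}^r)$, placing them in $\mc{V}^o$ and $\mc{V}^r$ respectively.

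I do not expect a substantive obstacle here: essentially all of the analytical work has already been paid for in Proposition~\ref{prop:containment_to_value_existence} (which uses convexity through the minimax theorem to rewrite $W^r$ as a fixed point of $\overline{f}$) and Theorems~\ref{thm:bound_containment} and~\ref{thm:robust_control_set_bounds}. The only point requiring care is bookkeeping, namely making explicit that convexity of $\mc{M}$ is needed only in the first step to reach the $\overline{f}$/$\underline{f}$ formulation of $W^r$/$W^o$, whereas Assumption~\ref{assum:containment_condition} is what drives the remaining identifications with $\underline{V}^o$ and $\overline{V}^r$.
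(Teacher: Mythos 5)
Your proposal is correct and follows essentially the same route as the paper's proof: identify $W^o,W^r$ with the fixed points of the bound operators $\underline{f},\overline{f}$, then chain Theorem~\ref{thm:bound_containment} (giving $\underline{X}=\underline{V}^B$, $\overline{X}=\overline{V}^B$) with Theorem~\ref{thm:robust_control_set_bounds} (giving $\underline{V}^B=\underline{V}^o$, $\overline{V}^B=\overline{V}^r$). You are in fact slightly more careful than the paper in explicitly invoking Proposition~\ref{prop:containment_to_value_existence} for the first identification and in checking attainment of the extrema inside $\mc{V}^o$ and $\mc{V}^r$, both of which the paper leaves implicit.
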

\begin{pf}
When $f$ satisfies Assumption~\ref{assum:containment_condition} on $\reals^{S} \times \mc{M}$, Theorem~\ref{thm:bound_containment} shows that $\underline{V}^B = W^o, \ \overline{V}^B = W^r.$
If $f, g^o,$ and $g^r$ also satisfies Assumption~\ref{assum:containment_condition} on $\reals^S\times \mc{M}$, then we apply Theorem~\ref{thm:robust_control_set_bounds} to derive $W^o = \underline{V}^o$ and $W^r = \overline{V}^r$. This proves the corollary statement. 
\qed \end{pf}
\begin{rem}
When Assumption~\ref{assum:containment_condition} is not satisfied, $W^o$ and $W^r$ still bound $\underline{V}^o$ and $\overline{V}^r$. This result is also stated in~\cite{wiesemann2013robust}. 
\end{rem}

\begin{figure}
    \centering
    \includegraphics[width=\columnwidth]{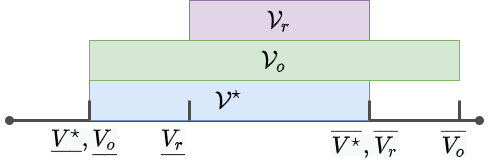}
    \caption{Illustration of Theorem~\ref{thm:robust_control_set_bounds}. The purple, green, blue regions indicate the ranges of $\mc{V}^r$, $\mc{V}^o$, and $\mc{V}^B$, respectively.}
    \label{fig:set_inclusion}
\end{figure}

\section{Value iteration for fixed point set computation}\label{sec:set_VI_propagation}
In the previous sections, we proved the existence of a fixed point set for value operators with compact parameter uncertainty sets and re-interpreted robust control through our techniques.  Next, we derive an iterative algorithm for computing the bounds of the fixed point set $\mc{V}$ given a value operator $h$ and parameter uncertainty set $\mc{M}$.


\textbf{Algorithm Sketch}. Based on the set-based value iteration~\eqref{eqn:set_based_value_iteration}, we iteratively find the one-step bounds of $H(\mc{V}^k)$ to converge the bounds of the fixed point set.

For any compact set $\mc{V} \in \mc{K}(\reals^S)$, the one step bounds of $H(\mc{V})$ are equivalent to the one-step output of the bound operators $\underline{h}$ and $\overline{h}$~\eqref{eqn:infsup_parameter_contractions} applied to the extremal points of $\mc{V}$. 
\begin{thm}[One step $H$ bounds]\label{thm:one_step_bounds}
Consider a set operator $H$~\eqref{eqn:general_set_operator} and its bound operators $\underline{h}$ and $\overline{h}$~\eqref{eqn:infsup_parameter_contractions} induced by $h$ on  $\reals^S\times\mc{M}$~\eqref{eqn:value_operator}. For a compact set $\mc{V}\subset \reals^S$, $H(\mc{V})$ is bounded by $\underline{h}(\underline{V})$ and $\overline{h}(\overline{V})$~\eqref{eqn:infsup_parameter_contractions} as
\begin{equation}\label{eqn:one_step_bounds}
    \underline{h}(\underline{V}) \leq V \leq \overline{h}(\overline{V}), \quad \forall \ V\ \in H(\mc{V}).
\end{equation}
where $\underline{V}$ and $\overline{V}$~\eqref{eqn:sup_inf_element_def} are the extremal elements of $\mc{V}$. If $h$ satisfies Assumption~\ref{assum:containment_condition} on $\reals^S\times\mc{M}$ and $\underline{V}, \overline{V} \in \mc{V}$, then $\underline{h}(\underline{V})$ and $\overline{h}(\overline{V})$ are the supremum and infimum elements of $H(\mc{V})$, respectively--- for all $s\in[S]$, $\underline{h}_s(\underline{V})$ and $\overline{h}_s(\overline{V})$ satisfy
\begin{equation}\label{eqn:one_step_bounds_exact}
  \underline{h}_s(\underline{V}) =  \underset{(V, m) \in \mc{V}\times\mc{M}}{\inf}{h_s(V, m)}, \ \overline{h}_s(\overline{V}) =   \underset{(V, m) \in \mc{V}\times\mc{M}}{\sup}h_s(V, m). 
\end{equation}
\end{thm}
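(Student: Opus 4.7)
The plan is to split the theorem into two independent parts, corresponding to the two assertions: the unconditional bounds~\eqref{eqn:one_step_bounds} and the exactness under Assumption~\ref{assum:containment_condition}. Both will be proved by coordinate-wise comparisons, exploiting the order-preserving property of $h(\cdot, m)$ from Definition~\ref{def:value_operator} and the definition of the bound operators~\eqref{eqn:infsup_parameter_contractions}.

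For~\eqref{eqn:one_step_bounds}, I would pick an arbitrary $V \in H(\mc{V})$ and use the definition~\eqref{eqn:general_set_operator} to write $V = h(V',m)$ for some $(V',m) \in \mc{V}\times \mc{M}$. Since $\mc{V}$ is compact, its extremal elements $\underline{V}, \overline{V}$~\eqref{eqn:sup_inf_element_def} are well-defined and satisfy $\underline{V} \leq V' \leq \overline{V}$. Applying order preservation of $h(\cdot, m)$ gives $h(\underline{V}, m) \leq h(V', m) \leq h(\overline{V}, m)$. Then the $s$-coordinate chain
\[
\underline{h}_s(\underline{V}) = \inf_{m' \in \mc{M}} h_s(\underline{V}, m') \leq h_s(\underline{V}, m) \leq V_s \leq h_s(\overline{V}, m) \leq \sup_{m' \in \mc{M}} h_s(\overline{V}, m') = \overline{h}_s(\overline{V})
\]
holds for every $s \in [S]$, which yields~\eqref{eqn:one_step_bounds}.

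For~\eqref{eqn:one_step_bounds_exact}, I would invoke Assumption~\ref{assum:containment_condition} applied to $\underline{V} \in \reals^S$: there exists $\underline{m} \in \bigcap_{s \in [S]} \argmin_{m \in \mc{M}} h_s(\underline{V}, m)$, so that $h_s(\underline{V}, \underline{m}) = \underline{h}_s(\underline{V})$ simultaneously for every $s$. Symmetrically, there exists $\overline{m}$ with $h_s(\overline{V}, \overline{m}) = \overline{h}_s(\overline{V})$ for all $s$. The assumption $\underline{V}, \overline{V} \in \mc{V}$ ensures that $(\underline{V}, \underline{m}), (\overline{V}, \overline{m}) \in \mc{V}\times\mc{M}$, so $\underline{h}(\underline{V}) = h(\underline{V}, \underline{m}) \in H(\mc{V})$ and $\overline{h}(\overline{V}) = h(\overline{V}, \overline{m}) \in H(\mc{V})$. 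Combined with Part 1, this makes them the infimum and supremum elements of $H(\mc{V})$. The joint $(V,m)$-level equalities in~\eqref{eqn:one_step_bounds_exact} then follow from a two-sided inequality argument: on the one hand, for any $(V',m') \in \mc{V} \times \mc{M}$, coordinate-wise order preservation and the definition of $\underline{h}$ give $h_s(V',m') \geq h_s(\underline{V}, m') \geq \underline{h}_s(\underline{V})$, so $\inf_{(V,m)\in \mc{V}\times \mc{M}} h_s(V,m) \geq \underline{h}_s(\underline{V})$; on the other, specialising to $(V,m) = (\underline{V}, \underline{m})$ gives the reverse inequality. The supremum identity is obtained analogously.

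The main obstacle is the second part, since attaining a common bound over all coordinates is exactly what fails in the absence of the containment condition; the argument hinges on extracting a single witness $\underline{m}$ (resp.\ $\overline{m}$) from the non-empty intersection in~\eqref{eqn:containment_args_condition}, rather than a state-dependent family of minimisers that might not lie in $\mc{M}$ jointly. Once that witness is in hand, the remaining manipulations are essentially bookkeeping: membership in $H(\mc{V})$ is immediate from $\underline{V}, \overline{V} \in \mc{V}$, and the joint infimum/supremum equalities are two-line sandwich arguments using Part 1.
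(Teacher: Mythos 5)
Your proposal is correct and follows essentially the same route as the paper's proof: order preservation plus the definition of the bound operators gives the unconditional sandwich~\eqref{eqn:one_step_bounds}, and the common minimiser/maximiser $\underline{m}, \overline{m}$ supplied by Assumption~\ref{assum:containment_condition}, together with $\underline{V}, \overline{V} \in \mc{V}$, places $\underline{h}(\underline{V})$ and $\overline{h}(\overline{V})$ inside $H(\mc{V})$ so that they are attained extrema. The only cosmetic difference is that you apply order preservation of $h(\cdot,m)$ directly and then take $\inf/\sup$ over $m$, whereas the paper first bounds over $m$ and then invokes the order preservation of $\overline{h}$ from Lemma~\ref{lem:parameterized_order_preserving}; the two orderings of these steps are interchangeable.
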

\begin{pf}
For all $s \in [S]$, $h_s(V, m) \leq \overline{h}_s(V)$ for all $m \in \mc{M}$. If $h$ is $K(V)$-Lipschitz and $\alpha$-contractions in $\mc{M}$, then $\overline{h}$ is order-preserving (Lemma~\ref{lem:parameterized_order_preserving}) such that $ \overline{h}_s(V) \leq  \overline{h}_s(\overline{V})$ for all $V \in \mc{V}$. We conclude that
\begin{equation}\label{eqn:thm6_proof_0}
    h(V, m) \leq \overline{h}(\overline{V}), \ \forall (V, m)\in\mc{V}\times\mc{M}.
\end{equation}
Since $\overline{h}$ is an upper bound, and $\sup$ is the least upper bound, it holds  that $ \sup_{V, m}[h(V, m)]_s \leq \overline{h}(\overline{V})$. We use the definition of ${H(\mc{V})}$~\eqref{eqn:general_set_operator} to conclude that $V \leq \overline{h}(\overline{V})$ for all $V \in H(\mc{V})$. The inequality $\underline{h}(\underline{V}) \leq V \ \forall V \in H(\mc{V})$ can be similarly proved. 

If $h$ satisfies Assumption~\ref{assum:containment_condition} on $\reals^S\times \mc{M}$ and $\underline{V}, \overline{V} \in \mc{V}$, Assumption~\ref{assum:containment_condition} states that there exists $\underline{m}\in \mc{M}$ such that $h(\underline{V}, \underline{m}) = \underline{h}(\underline{V})$. Therefore, $\underline{h}(\underline{V}) \in  H(\mc{V})$. Since $\underline{h}(\underline{V})$ also lower bounds all the elements of $H(\mc{V})$, it is the infimum element of $H(\mc{V})$. The fact that the greatest element of $H(\mc{V})$ is $\overline{h}(\overline{V})$ can be similarly proved.
\qed \end{pf}
Based on Theorem~\ref{thm:one_step_bounds}, we propose the following bound approximation algorithm of the fixed point set $\mc{V}^\star$~\eqref{eqn:fixed_point_set} for a set-valued operator $H$~\eqref{eqn:value_operator}. 
\begin{algorithm}
\caption{Bound approximation of the fixed point set $\mc{V}$}
\begin{algorithmic}[1]
\Require \(\mc{C}\), \(\mc{P}\), \({V}^0\), $\epsilon$.
\Ensure \(\underline{V} \), \(\overline{V}\)
\State{\(\underline{V}^0 := \overline{V}^0 := V^0\)}
\State{\(e^0 = \frac{1 - \gamma}{\gamma}\epsilon\)}
\While{\(\frac{\gamma}{1 - \gamma}e^{k} \geq \epsilon \)}
    \State{\(\underline{V}^{k+1}_s = \min_{m\in \mc{M}} h_s(\underline{V}^k, m), \quad \forall s \in [S]\)}\label{algline:min_arg_m}
    \State{\(\overline{V}^{k+1}_s = \max_{m\in \mc{M}} h_s(\overline{V}^k, m), \quad \forall s \in [S]\)}\label{algline:max_arg_m}
    \State{\( e^{k+1}= \max \Big\{\norm{\underline{V}^{k+1} - \underline{V}^k}, \norm{\overline{V}^{k+1} - \overline{V}^k}\Big\}\)}
    \State{\(k = k+1\)}
\EndWhile
\end{algorithmic}
\label{alg:set_VI}
\end{algorithm}

\subsection{Computing one-step optimal parameters}
Algorithm~\ref{alg:set_VI} is stated for a general MDP parameter set $\mc{M}$ and does not specify how to compute lines~\ref{algline:min_arg_m} and~\ref{algline:max_arg_m}.  Here we discuss solution methods for different shapes of $\mc{M}$.
\begin{enumerate}
    \item \textbf{Finite $\mc{M}$}. If $\mc{M} = \{m_1, \ldots, m_N\}$ is a set with finite number of elements, we can directly compute line~\ref{algline:min_arg_m} as 
    \begin{equation}\label{eqn:finite_M_computation}
       \underline{V}^{k+1} = \min\Big\{ h_s(\underline{V}^k, m_i) \ | \ i = \{1,\ldots, N\}\Big\}. 
    \end{equation}
    For line~\ref{algline:max_arg_m}, we replace $\min$ with $\max$ in~\eqref{eqn:finite_M_computation}.
    \item \textbf{Convex $\mc{M}$}. When $\mc{M}$ is a convex set, the computation depends on $h$. If $h = g^\pi$ is the policy operator, lines~\ref{algline:min_arg_m} and~\ref{algline:max_arg_m} can be solved as convex optimization problems. 
    If $h$ is the Bellman operator $f$, lines~\ref{algline:min_arg_m} and~\ref{algline:max_arg_m} take on min-max formulation and is NP-hard to solve in the general form~\cite{wiesemann2013robust}. When $\mc{M}$ can be characterized by an ellipsoidal set of parameters, the solutions to lines~\ref{algline:min_arg_m} and~\ref{algline:max_arg_m} is given in~\cite{wiesemann2013robust}. 
\end{enumerate}
We recall the stochastic path planning problem from Example~\ref{ex:wind_uncertainty} with the two different parameter uncertainty scenarios. When the wind field uncertainty is discrete, $\mc{M}$ is finite, when wind field is a combination of the major wind trends, $\mc{M}$ is convex. 

\subsection{Algorithm Convergence Rate}
When lines~\ref{algline:min_arg_m} and~\ref{algline:max_arg_m} are solvable, Algorithm~\ref{alg:set_VI} asymptotically converges to \emph{approximations} of the bounding elements of $\mc{V}^\star$. If $\mc{M}$ satisfies Assumption~\ref{assum:containment_condition} with respect to $h$, Algorithm~\ref{alg:set_VI} derives the exact bounds of $\mc{V}$. 
Algorithm~\ref{alg:set_VI} has similar rates of convergence in Hausdorff distance as standard value iteration using $h$ on $\reals^S$.
\begin{thm}\label{thm:vi_convergence}
Consider the value operator $h$, compact uncertainty set $\mc{M}$, and the fixed point set $\mc{V}^\star$ of the set-based operator $H$~\eqref{eqn:general_set_operator} induced by $h$ on $\reals^{S}\times \mc{M}$. If $\mc{M}$ satisfies Assumption~\ref{assum:containment_condition} with respect to $h$, then at each iteration $k$,
\begin{equation}\label{eqn:thm6_linear_convergence}
 \textstyle\norm{\underline{V}^{k+1} - \underline{V}^\star}_{\change{\infty}} \leq \alpha\norm{\underline{V}^k - \underline{V}^\star}_{\change{\infty}}, \norm{\overline{V}^{k+1} - \overline{V}}_{\change{\infty}} \leq \alpha\norm{\overline{V}^k - \overline{V}^\star}_{\change{\infty}},   
\end{equation}
where all norms are infinity norms, and $\underline{V}^\star, \overline{V}^\star$ are the infimum and supremum bounds of $\mc{V}$, respectively. 
At Algorithm~\ref{alg:set_VI}'s termination, $\underline{V}^k, \overline{V}^k$ satisfies
\begin{equation}
    \max\{\norm{\underline{V}^k - \underline{V}^\star}_{\change{\infty}}, \norm{\overline{V}^k - \overline{V}^\star}_{\change{\infty}}\} < \epsilon.
\end{equation}
\end{thm}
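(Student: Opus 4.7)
The plan is to reduce the algorithm's behavior to iterating the bound operators $\underline{h}$ and $\overline{h}$ defined in~\eqref{eqn:infsup_parameter_contractions}, and then invoke Lemma~\ref{lem:parametrized_contraction} together with Theorem~\ref{thm:bound_containment}. First I would observe that lines~\ref{algline:min_arg_m} and~\ref{algline:max_arg_m} compute exactly $\underline{V}^{k+1}=\underline{h}(\underline{V}^k)$ and $\overline{V}^{k+1}=\overline{h}(\overline{V}^k)$. By Lemma~\ref{lem:parametrized_contraction}, both maps are $\alpha$-contractions on $\reals^S$ with unique fixed points $\underline{X}$ and $\overline{X}$. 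Since $\mc{M}$ satisfies Assumption~\ref{assum:containment_condition} with respect to $h$, Theorem~\ref{thm:bound_containment} identifies these fixed points as the infimum and supremum of $\mc{V}^\star$, so $\underline{X}=\underline{V}^\star$ and $\overline{X}=\overline{V}^\star$.

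The first claim~\eqref{eqn:thm6_linear_convergence} then follows immediately by writing
\[\norm{\underline{V}^{k+1}-\underline{V}^\star} = \norm{\underline{h}(\underline{V}^k)-\underline{h}(\underline{V}^\star)} \leq \alpha \norm{\underline{V}^k-\underline{V}^\star},\]
together with the identical computation for $\overline{h}$ and $\overline{V}^\star$.

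For the termination guarantee I would combine the contraction property with the triangle inequality in the standard a posteriori bound for fixed-point iteration. With $e^k=\max\{\norm{\underline{V}^k-\underline{V}^{k-1}},\norm{\overline{V}^k-\overline{V}^{k-1}}\}$ as in the algorithm,
\[\norm{\underline{V}^{k-1}-\underline{V}^\star}\leq \norm{\underline{V}^{k-1}-\underline{V}^k}+\norm{\underline{V}^k-\underline{V}^\star}\leq e^k+\alpha\norm{\underline{V}^{k-1}-\underline{V}^\star},\]
so $\norm{\underline{V}^{k-1}-\underline{V}^\star}\leq e^k/(1-\alpha)$ and hence $\norm{\underline{V}^k-\underline{V}^\star}\leq \alpha e^k/(1-\alpha)$; the same chain holds for $\overline{V}^k$. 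Taking $\alpha=\gamma$, the loop exit condition $\tfrac{\gamma}{1-\gamma}e^k<\epsilon$ directly yields $\max\{\norm{\underline{V}^k-\underline{V}^\star},\norm{\overline{V}^k-\overline{V}^\star}\}<\epsilon$.

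The only point I would be careful about is the index bookkeeping in the termination argument, since $e^k$ as recorded by the algorithm is a backward rather than a forward difference; beyond that the argument is a specialization of the standard Banach a posteriori estimate, and no technical machinery beyond Lemma~\ref{lem:parametrized_contraction} and Theorem~\ref{thm:bound_containment} is needed.
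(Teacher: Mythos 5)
Your proposal is correct and follows essentially the same route as the paper: identify lines~\ref{algline:min_arg_m}--\ref{algline:max_arg_m} with iteration of $\underline{h}$ and $\overline{h}$, apply Lemma~\ref{lem:parametrized_contraction} for the contraction estimate, use Theorem~\ref{thm:bound_containment} (via Assumption~\ref{assum:containment_condition}) to identify the fixed points with $\underline{V}^\star,\overline{V}^\star$, and finish with the standard a posteriori triangle-inequality bound against the loop's exit condition. Your backward-difference bookkeeping for $e^k$ is in fact slightly more faithful to the algorithm than the paper's forward-difference version, but the argument is the same.
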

\begin{pf}
From Algorithm~\ref{alg:set_VI}, $\overline{V}^{k+1} = \overline{h}(\overline{V}^k)$. From Lemma~\ref{lem:parametrized_contraction}, $\overline{h}$ is an $\alpha$-contraction. We obtain  $\norm{\overline{V}^{k+1} - \overline{V}^\star}_{\change{\infty}} \leq \alpha \norm{\overline{V}^k - \overline{V}^\star}_{\change{\infty}}$ and note that~\eqref{eqn:thm6_linear_convergence} holds by induction. 
Next, we apply triangle inequality to $\norm{\overline{V}^k - \overline{V}^\star}_{\change{\infty}}$ to derive
\begin{equation}\label{eqn:thm7_proof_0}
    \norm{\overline{V}^{k} - \overline{V}^\star}_{\change{\infty}} \leq \norm{\overline{V}^{k} - \overline{V}^{k+1}}_{\change{\infty}} + \norm{\overline{V}^{k+1} - \overline{V}^\star}_{\change{\infty}}.
\end{equation} 
We can then use $\norm{\overline{V}^{k+1} - \overline{V}^\star}_{\change{\infty}} \leq \alpha \norm{\overline{V}^k - \overline{V}^\star}_{\change{\infty}}$ to bound~\eqref{eqn:thm7_proof_0} as $\norm{\overline{V}^{k} - \overline{V}^\star}_{\change{\infty}} \leq \frac{1}{1- \alpha} \norm{\overline{V}^{k} - \overline{V}^{k+1}}_{\change{\infty}}$.
A similar argument can show that $\norm{\underline{V}^{k} - \underline{V}^\star}_{\change{\infty}} \leq \frac{1}{1- \alpha} \norm{\underline{V}^{k} - \underline{V}^{k+1}}_{\change{\infty}} $. 
When Algorithm~\ref{alg:set_VI}'s while condition is satisfied, $\max\Big\{\norm{\overline{V}^{k} - \overline{V}^\star}_{\change{\infty}}, \norm{\underline{V}^{k} - \underline{V}^\star}_{\change{\infty}}\Big\} \leq \epsilon.$ This concludes our proof.
\qed \end{pf}
In particular, the Bellman operator $f$ and policy operator $g^\pi$ are $\gamma$-contractive on $\reals^S$, where $\gamma$ is the discount factor, therefore\change{, } Theorem~\ref{thm:one_step_bounds} applies with $\alpha = \gamma$. 
\begin{rem}
Theorem~\ref{thm:vi_convergence} implies that at the termination of Algorithm~\ref{alg:set_VI}, the fixed point set $\mc{V}^\star$ can be \emph{over-approximated} by
\[\mc{V}^\star \subseteq \mc{V}_{approx} := \prod_{s \in[S]}[\underline{V}_s^{k+1} - \epsilon, \overline{V}_s^{k+1} + \epsilon] , \]
where $k$ is the last iterate before Algorithm~\ref{alg:set_VI} terminates.
\end{rem}
\section{Path Planning in Time-varying Wind Fields}\label{sec:example}
We apply set-based value iteration to wind-assisted probabilistic path planning of a balloon in strong, uncertain wind fields~\cite{wolf2010probabilistic}. MDP as a model for wind-assisted path planning of balloons in the stratosphere and exoplanets has recently gained traction~\cite{wolf2010probabilistic,bellemare2020autonomous}. Discrete state-action MDPs have been shown to be a viable high-level path planning model~\cite{wolf2010probabilistic} for such applications. 

\textbf{Mission Objective}. In the two dimensional wind-field, we assume that the wind-assisted balloon is tasked with reaching target state $(8,8)$ in Figure~\ref{fig:wind_field} using minimum fuel. 

\textbf{Uncertain Wind Fields}. By collecting a set of wind data on the environment's wind field, an MDP can be created and a policy that handles stochastic planning can be deployed. However, wind can be a time-varying factor that causes the \emph{expected} optimal policy to have \emph{worse-than-expected} worst-case performance. We built an ideal uncertain wind field to demonstrate how the set Bellman operator can be used to predict the best and worst-case behavior of a robust policy. 

\textbf{MDP Modeling Assumptions}. Following the framework described in~\cite{wolf2010probabilistic}, we model the path planning problem in an uncertain wind field as an infinite horizon, discounted MDP with discrete state-actions in a two-dimensional space. While balloons typically traverse in three dimensions, we assume that the wind is consistent in the vertical direction and that the final target is any vertical position along the given two-dimensional coordinates. As a result, we can disregard the vertical position during planning. 
\begin{figure}[ht!]
    \centering
    \subfloat[\centering Wind field traversed by the balloon, discretized into $81$ states. ]{{ \includegraphics[width=0.4\columnwidth]{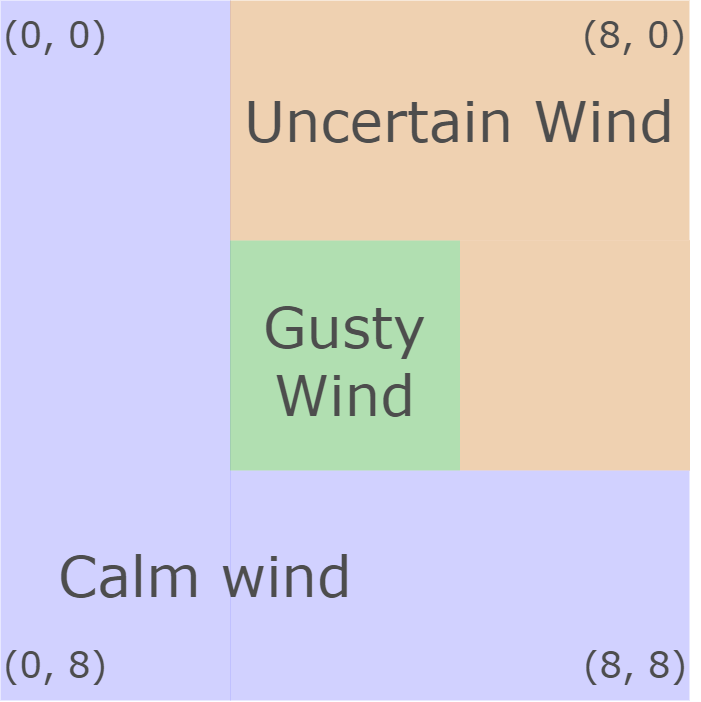}}}%
    \quad
    \subfloat[\centering At each state, $9$ actions corresponding to different thrust vectors are available.]{{\includegraphics[width=0.4\columnwidth]{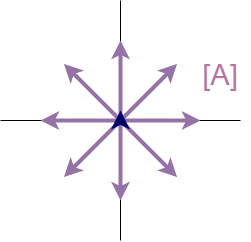}}}%
    \caption{}%
    \label{fig:wind_field}%
\end{figure}

\textbf{States}. A total of $81$ states represent the two-dimensional space, composed of three different regions characterized by their wind variability as shown in Figure~\ref{fig:wind_field}.
\begin{enumerate}
        \item \textbf{Calm wind}. In calm states $S_{calm}$, the wind magnitude varies uniformly between $[0, 0.5]$, and the wind direction is uniformly sampled between $[0, 2\pi]$.
        $S_{calm} = \{(i, j) \ | \ (0, 0)\leq(i, j) \leq (2, 8), \ (6, 0) \leq (i, j) \leq (8, 8)\}.$
        \item \textbf{Gusty wind}. In states with gusts $S_{gusty}$, wind magnitude is consistently $1$, while the wind direction is uniformly sampled between $[0, 2\pi]$. $S_{gusty} = \{(i,j) \ | \ (3, 3) \leq (i, j) < (6,6) \}$.
        \item \textbf{Unreliable wind}. In unreliable states $S_{unreliable}$, a predictable wind front occasionally moves across an otherwise windless region. In other words, the wind magnitude is either $0$ or $1$ and the wind direction varies uniformly between $[\pi/4, \pi/2]$.
    \end{enumerate}
    
\textbf{Actions}. The balloon is equipped with an actuator that provides a constant thrust of $1$ in $8$ discretized directions shown in Figure~\ref{fig:wind_field}b. The only stationary action vector with zero magnitude is highlighted in blue in the center of Figure~\ref{fig:wind_field}b. We assume that the actuation force is enough to move the balloon across one state in wind with magnitude $\leq 0.5$, and is otherwise not strong enough to overcome wind effects. 
\begin{figure}[ht!]
    \centering
    \subfloat[\centering State transition in calm wind. ]{{\includegraphics[width=0.27\columnwidth]{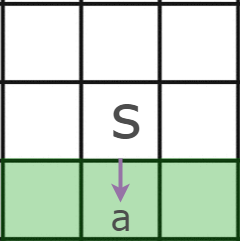}}}%
    \quad
    \subfloat[\centering State transition in unreliable wind.]{{\includegraphics[width=0.27\columnwidth]{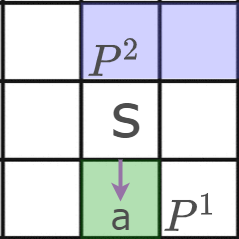}}}%
    \quad
    \subfloat[\centering State transition in gusty wind.]{{\includegraphics[width=0.27\columnwidth]{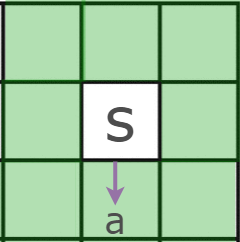}}}%
    \caption{Transition probabilities for the three different wind regions.}%
    \label{fig:neighbors}%
\end{figure}

\textbf{Transition Probabilities}. \change{The transition probabilities are region-dependent. In the states $[S_{calm}]$ and $[S_{gusty}]$, the transition dynamics are stochastic but stationary in time. In the states $[S_{unreliable}]$, the transition dynamics are stochastic but change over time. We define}  the following neighboring states \change{for each state $s \in [S]$}. 
\begin{enumerate}
    \item $\mc{N}(s)$: all  $8$ neighboring states of state $s$. 
    \item $\mc{N}(s,a, 0)$: the neighboring state of $s$ in the direction of $a$.
    \item $\mc{N}(s,a, 1)$: the neighboring state of $s$ in the direction of $a$ plus the two adjacent states as shown in Figure~\ref{fig:neighbors}a.
    \item $\mc{N}(s,a,2)$: the up and upper-right neighbors of $s$, as shown in Figure~\ref{fig:neighbors}b. 
\end{enumerate}
In the calm wind region, the transition probabilities are given by
\begin{equation}
    P_{sa, s'} = \begin{cases} \frac{1}{\mc{N}(s,a, 1)}, & s' \in \mc{N}(s,a,1) \\
    0 & \text{otherwise}
    \end{cases}, \ \forall \  s \in [S_{calm}].
\end{equation}
In the gusty wind region, the transition probabilities are given by 
\begin{equation}
    P_{sa, s'} = \begin{cases} \frac{1}{\mc{N}(s)}, & s' \in \mc{N}(s) \\
    0 & \text{otherwise}
    \end{cases}, \ \forall \  s \in [S_{gusty}], \ \forall \ a \in [A].
\end{equation}
In the unreliable wind region, the transition probabilities vary between transition dynamics $P_s^1$ and $P_s^2$. 
\begin{equation}
    P^1_{sa, s'} = \begin{cases}1, & s' \in \mc{N}(s, a, 0) \\
    0 & \text{otherwise}
    \end{cases}, \ \forall \  s \in [S_{gusty}], \ \forall \ a \in [A].
\end{equation}
\begin{equation}
    P^2_{sa, s'} = \begin{cases}0.5, & s' \in \mc{N}(s, a, 2) \\
    0 & \text{otherwise}
    \end{cases}, \ \forall \  s \in [S_{gusty}], \ \forall \ a \in [A].
\end{equation}
Collectively, $P^1_s$ and $P^2_s$ collectively form the uncertainty set $\mc{P}_s\subset \Delta_{S}^A$ defined at each state.
\begin{equation}\label{eqn:variable_wind_uncertainty}
    \mc{P}_s = \{P^i_{sa} \ | \ i \in \{1,2\}, a \in [A]\}, \ \forall s \in [S_{unreliable}].
\end{equation} 
\textbf{Cost}. We define the following state-action cost to achieve the mission objective: at each state-action, the cost is the sum of the current distance from target position $s_{targ} = (8,8)$, as well as the fuel expended by given action. 
\[C((i,j), a) = \sqrt{(i - s_{targ}[0])^2 + (j- s_{targ}[1])^2} + \half\norm{a}_2.\]  
We take $a = 1$ for all actions except for the staying still action, where $a = 0$. 
\subsection{Bellman, optimistic policy, and robust policy}
We first compute the optimistic and robust bounds of the MDP with parameter uncertainty in $\mc{P}$ when $s \in [S_{unreliable}]$ by running Algorithm~\ref{alg:set_VI}. The results are shown in Figure~\ref{fig:set_bounds}.
\begin{figure}[ht!]
    \centering 
    \subfloat[\centering Optimistic case with expected objective of $54.2$]{{ \includegraphics[width= 0.48\columnwidth]{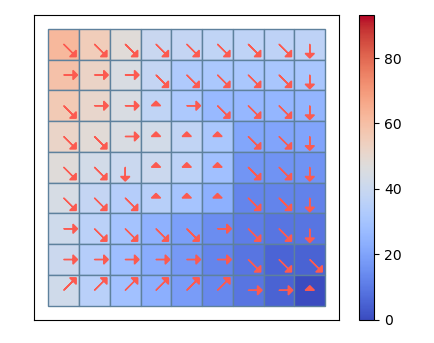}}}%
    \
    \subfloat[\centering Robust case with expected objective of $96.7$.]{{\includegraphics[width=0.48\columnwidth]{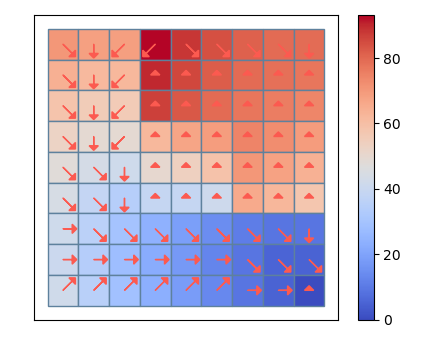}}}%
    \caption{}%
    \label{fig:set_bounds}%
\end{figure}

We denote the optimistic policy as $\pi^o$ and the robust policy as $\pi^r$, and derive the bounds of their respective value vector sets $\mc{V}^o$~\eqref{eqn:optimistic_fixed_point_set}  and $\mc{V}^r$~\eqref{eqn:robust_fixed_point_set} using Algorithm~\ref{alg:set_VI}. The output is compared against the bounds of the set-based Bellman operator's fixed point set $\mc{V}^\star$ in Table~\ref{tab:wind_bound_values}. 
\begin{table}[h!]
\begin{center}
\begin{tabular}{ccc}
\hline
Set & Maximum value & Minimum value   \\
\hline
$\mc{V}^\star $& 70.61 & 62.25\\
$\mc{V}^o $& 101.58 & 62.25 \\
$\mc{V}^r $& 70.63 & 70.52\\
\hline
\end{tabular}
\end{center}

\caption{Bellman, optimistic policy, robust policy value bounds of the uncertain wind field.}\label{tab:wind_bound_values}
\end{table}

\textbf{Time-varying wind field}
Next, we consider a time-varying wind field: at each time step $k$, the transition probability $P^k$ is chosen at random from $\mc{P}$~\eqref{eqn:variable_wind_uncertainty}. In this time-varying wind field, we compare three different policy deployments: 1) stationary optimistic policy $\pi^o$ as policy operator $g^o$~\eqref{eqn:optimistic_fixed_point_set}, 2) stationary robust policy $\pi^r$ as policy operator $g^r$~\eqref{eqn:robust_fixed_point_set}, and 3) dynamically changing policy that is optimal for the MDP $([S], [A], P^k, C, \gamma)$ as $f$~\eqref{eqn:bellman_operator}. These three different policy deployments are given by
\begin{align}
    V^{k+1} & = g^o(V^k, C, P^k), \label{eqn:time_varying_optimistic_pol}\\
    V^{k+1} & = g^r(V^k, C, P^k), \label{eqn:time_varying_robust_pol}\\
    V^{k+1} & = f(V^k, C, P^k) \label{eqn:time_varying_bellman_pol}.
\end{align}
The resulting cost-to-go at state $s_{orig} = [0,0]$ is plotted in Figure~\ref{fig:time_varying_wind_comparison}. Here, we see that the optimistic policy deployment~\eqref{eqn:time_varying_optimistic_pol} has the greatest variation in value over the course of $50$ MDP time steps. Both the robust policy deployment~\eqref{eqn:time_varying_robust_pol} and the dynamically changing policy deployment~\eqref{eqn:time_varying_bellman_pol} achieve better upper-bound at each MDP iteration. The dynamically changing policy deployment~\eqref{eqn:time_varying_bellman_pol} achieves less than $70$ in cost-to-go on average, which is the best among all three deployments. As we discussed in Remark~\ref{rem:robust_low_variance}, the robust policy deployment has the smallest variance in value in the presence of wind uncertainty, achieving a value difference of less than $0.1$. 
\begin{figure}[ht!]
    \centering 
    \subfloat[\centering Optimistic Policy with $\mc{V}^o$'s bounding values.]{{ \includegraphics[width= 0.8\columnwidth]{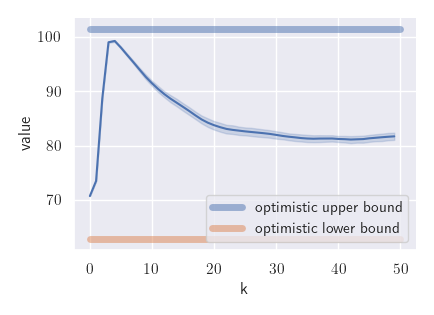}}}%
    \\
    \subfloat[\centering  Robust Policy with $\mc{V}^r$'s bounding values.]{{\includegraphics[width=0.8\columnwidth]{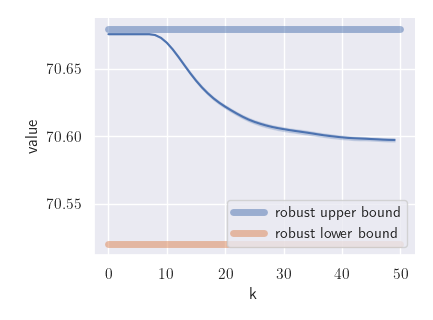}}}%
     \\
    \subfloat[\centering Dynamically changing policy with $\mc{V}^B$'s bounding values.]{{\includegraphics[width=0.8\columnwidth]{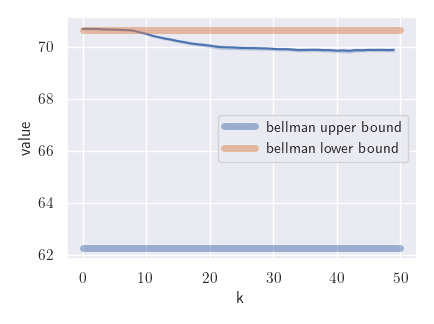}}}%
    \caption{Comparison of robust policy, optimistic policy, and Bellman policy's value trajectories in time-varying wind fields. Center blue line is the average over $50$ trials. The shaded blue region denotes the standard variation. The top and bottom lines are the supremum and infimum values of the fixed points.}%
    \label{fig:time_varying_wind_comparison}%
\end{figure}

\textbf{Sampled solutions}.
We can compute a sampled MDP model based on $50$ samples of wind vectors for each state. Based on these samples, we add the action vector and compute the statistical distribution of state transitions. We then compute the value of these \emph{stationary} sampled MDPs, and compare $9$ randomly selected states' values. The resulting scatter plot is shown in Figure~\ref{fig:wind_sampling}.
\begin{figure}
    \centering
    \includegraphics[width=1\columnwidth]{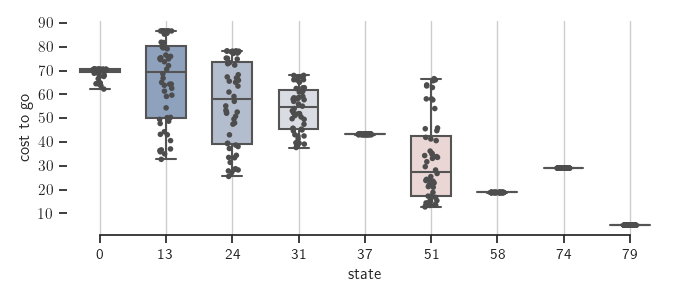}
    \caption{Comparison of different optimal value vectors under the Bellman operator for $50$ randomly sampled MDPs. On the x-axis, the state number is computed as $i\times 9 + j$.}
    \label{fig:wind_sampling}
\end{figure}
\section{Conclusion}
In this paper, we categorized a class of operators utilized to solve Markov decision processes as value operators and lifted their input space from vectors to \emph{compact sets} of vectors. We showed using fixed point analysis that the set extensions of value operators have fixed point sets that remain invariant given a compact set of MDP parameter uncertainties. These sets were applied to robust dynamic programming to further enrich existing results and generalize the $k$-rectangularity assumption for robust MDPs. Finally, we applied our results to a path planning problem for time-varying wind fields. For future work, we plan on applying set-based value operators to stochastic games in the presence of uncoordinated players such as humans, as well as applying value operators to reinforcement learning to synthesize robust learning algorithms. 
\bibliographystyle{plain}
\bibliography{reference}
\appendix
\section{Set sequence convergence}\label{app:1}
\begin{lem}
\label{basicdh}
Let $\{\mc{V}_n\}\subseteq \mc{K}(\reals^{S})$ be a converging sequence for $d_{\mc{K}}$ with $\mc{V}_n \to\mc{V} $ as $n\to\infty$. For all $V\in\mc{V}$, there exists a converging subsequence $\{V^{\varphi(n)}\}_{n \in \mathbb{N}}$ whose limit is $V$ for $\norm{\cdot}$. 
\end{lem}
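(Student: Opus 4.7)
The plan is to read off the conclusion directly from the definition of the Hausdorff distance together with the compactness of each $\mc{V}_n$. The core observation is that, because the point-to-set distance $d(\cdot,\mc{V}_n)$ from \eqref{eqn:distance_function} is dominated by the Hausdorff distance $d_{\mc{K}}$ from \eqref{eqn:hausdorff_distance}, hypotheses on $d_{\mc{K}}(\mc{V}_n,\mc{V})$ immediately control the distance of every fixed $V\in\mc{V}$ from each set $\mc{V}_n$.

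First I would fix an arbitrary $V\in\mc{V}$ and bound
\[
d(V,\mc{V}_n)\;\le\;\sup_{W\in\mc{V}} d(W,\mc{V}_n)\;\le\; d_{\mc{K}}(\mc{V}_n,\mc{V}).
\]
By the hypothesis $\mc{V}_n\to\mc{V}$ in $d_{\mc{K}}$, the right-hand side tends to $0$, hence $d(V,\mc{V}_n)\to 0$. Next, since $\mc{V}_n\in\mc{K}(\reals^S)$ is compact and the map $W\mapsto\norm{V-W}$ is continuous, the infimum defining $d(V,\mc{V}_n)$ is attained: pick $W^n\in\mc{V}_n$ with $\norm{V-W^n}=d(V,\mc{V}_n)$. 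Then $\norm{V-W^n}\to 0$, so the whole sequence $\{W^n\}$ already converges to $V$ in $\norm{\cdot}$, and we may take $\varphi=\mathrm{id}$.

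To present the result in the explicit subsequence form required by its use in the proof of Theorem~\ref{thm:bound_containment_inexact}, I would instead build $\varphi$ inductively. For each $k\in\mathbb{N}$, the convergence $d(V,\mc{V}_n)\to 0$ yields some $N_k\in\mathbb{N}$ such that $d(V,\mc{V}_n)<(k+1)^{-1}$ for every $n\ge N_k$. Setting $\varphi(0):=N_0$ and $\varphi(k+1):=\max\{N_{k+1},\varphi(k)+1\}$ ensures $\varphi$ is strictly increasing, and choosing $W^{\varphi(k)}\in\mc{V}_{\varphi(k)}$ attaining $d(V,\mc{V}_{\varphi(k)})$ (again by compactness of $\mc{V}_{\varphi(k)}$) gives $\norm{V-W^{\varphi(k)}}<(k+1)^{-1}$, whence $W^{\varphi(k)}\to V$.

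There is really no hard step here: the lemma is essentially a bookkeeping consequence of the definition of $d_{\mc{K}}$ together with compactness of the $\mc{V}_n$. The only mild subtlety is the bookkeeping needed to produce a \emph{strictly} increasing index map $\varphi$, which is handled by the $\max\{N_{k+1},\varphi(k)+1\}$ trick above; if attainment of the infimum were unavailable (e.g.\ in a weaker setting), one would replace attainment by an $\varepsilon$-approximation argument, choosing $W^{\varphi(k)}$ with $\norm{V-W^{\varphi(k)}}<d(V,\mc{V}_{\varphi(k)})+(k+1)^{-1}$, but this is not needed here since $\mc{V}_n\in\mc{K}(\reals^S)$.
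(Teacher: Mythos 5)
Your proof is correct and follows essentially the same route as the paper's: bound $d(V,\mc{V}_n)$ by the Hausdorff distance, use compactness of $\mc{V}_n$ to attain the point-to-set infimum, and then do the index bookkeeping. Your observation that the full sequence of nearest points already converges (so $\varphi=\mathrm{id}$ suffices) is a small but valid simplification of the paper's explicit $\min$-over-indices construction.
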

\begin{pf}
Let $V\in \mc{V}$. We can define the strictly increasing function $\varphi$ on $\naturals$ as follows:
$\varphi(0):=0$ and for all $n\in \naturals$, $\varphi(n+1):=\min\{j>\varphi(n) \mid \exists\, V^j\in \mc{V}^j, \norm{V-V^j}=d(V,\mc{V}^j)\leq (n+1)^{-1}\}$. Finally, as for all $n\in\naturals^*$, $\norm{V-V^{\varphi(n)}}\leq (\varphi(n)+1)^{-1}$, the result holds. 
\qed \end{pf}
\section{Proof of Lemma~\ref{lem:operator_continuity}}\label{app:2}
\begin{pf}Let $(V,m)\in \reals^S\times \mc{M}$ and consider a sequence $\{(V_k,m_k)\}_{k\in\mathbb{N}} \subset \reals^{S}\times \mc{M}$ that converges to $(V,m)$. It holds that
$\norm{h(V_k,m_k)-h(V,m)} $ $\leq \norm{h(V_k,m_k)-h(V,m_k)} + \norm{h(V,m_k)-h(V,m)}$, where from the $\alpha$-contractive property of $h(\cdot, m^k)$, $\norm{h(V_k,m_k)-h(V,m_k)}  \leq \alpha\norm{V_k-V}$. From the $K(V)$-Lipschitz property of $h(V,\cdot)$, \[\textstyle\norm{h(V,m_k)-h(V,m)} \leq K(V)\norm{m_k-m}.\]
As both $\lim_{k\to\infty}\norm{V_k-V}\to0$ and $\lim_{k\to\infty}\norm{m_k-m}\to0$,  $\norm{h(V_k,m_k)-h(V,m)}\to 0$ and $h$ is continuous. \qed
\end{pf}
\section{Proof of Lemma~\ref{lem:parametrized_contraction}}\label{app:3}
\begin{pf} We show that both the Bellman operator $f$ and the policy evaluation operator $g^\pi$ satisfy the contractive, order preserving, and Lipschitz properties given in Definition~\ref{def:value_operator}. 
Contraction: given $(C,P) \in \mc{M}$, $g^\pi(\cdot, C, P)$ and $f(\cdot, C, P)$ are both $\gamma$-contractions~\cite[Prop.6.2.4]{puterman2014markov} on the complete metric space $(\reals^S, \norm{\cdot}_{\infty})$,
where $\gamma < 1$ is the discount factor. 

Order preservation: given $(C,P) \in \mc{M}$, the operator $g^\pi(\cdot, C, P)$ is order preserving~\cite[Lem.6.1.2]{puterman2014markov}. 
Consider $U,V\in\reals^S$ where $U\leq V$. If $g^\pi(\cdot, C, P)$ is order-preserving, $g^\pi(U, C,P)\leq g^\pi(V, C,P)$ for all $\pi\in \Pi$. Taking the infimum over $\Pi$, we have 
$f(U, C,P)=\inf_{\pi\in \Pi} g^\pi(U, C,P)\leq \inf_{\pi \in \Pi}g^\pi(V, C,P)=f(V, C,P)$.

$K(V)$-Lipschitz: given $(C,P), (C', P') \in \mc{M}$ and $V \in \reals^S$, \change{we prove the following for each $s\in[S]$,} 
\begin{multline}\label{eqn:lem_proof2_0}
|f_s(V, C', P')-f_s(V, C,P)|\\
\leq \norm{c_s' - c_s}_\infty + \gamma\norm{P'_s - P_s}_\infty\max\{\norm{\pi^\star_s}_\infty, \norm{\hat{\pi}_s}_\infty\}\norm{V}_\infty.  
\end{multline}
\change{We prove~\eqref{eqn:lem_proof2_0} for each of the following cases: 1) $f_s(V, C', P') \geq f_s(V, C,P)$, and 2) $f_s(V, C', P') \leq f_s(V, C,P)$. For case 1), l}et $\hat{\pi}$~\eqref{eqn:optimalPol} be the optimal policy for $f(V, C', P')$ and $\pi^\star$ be the optimal policy for $f(V, C,P)$. For $s \in [S]$, suppose $f_s(V, C', P') \geq f_s(V, C,P)$, then $0 \leq f_s(V, C', P')-f_s(V, C,P) \leq (c_s')^\top \hat{\pi}_s - c_s^\top \pi^\star_s + \gamma (P'_s\hat{\pi}_s)^\top V - \gamma (P_s{\pi}^\star_s)^\top V$. Since $\pi^\star$ is sub-optimal for $f(V, C',P')$, we can upper bound $|f_s(V, C', P')-f_s(V, C,P)| \leq (c_s' - c_s)^\top \pi^\star_s + \gamma[(P'_s - P_s){\pi_s}^\star]^\top V$. 
Since $\pi^\star_s, \hat{\pi}_s \in \Delta_A$, $\norm{\pi^\star_s}_\infty\leq 1$. \change{We conclude that~\eqref{eqn:lem_proof2_0} holds when $f_s(V, C', P') \geq f_s(V, C,P)$. For case 2), $f_s(V, C', P') \leq f_s(V, C,P)$,~\eqref{eqn:lem_proof2_0} also holds by similar arguments.}

\change{Letting $m' = (C',P')$ and $m = (C,P)$, w}e can upper bound $f(V,m) - f(V, m') = f - f'$ as 
\begin{align}
    \norm{f - f'}_\infty &\leq \max_{s\in[S]}\{\norm{c'_s - c_s}_\infty + \gamma \norm{(P_s - P'_s)^\top V}_\infty\} \\
    &\leq \max(1, \gamma\norm{V}_\infty)\norm{m - m'}_\infty.
\end{align}
The policy evaluation operator $g^\pi$ satisfies~\eqref{eqn:lem_proof2_0} if $\max\{\norm{\pi^\star_s}_\infty, \norm{\hat{\pi}_s}_\infty\}$ is replaced by $ \norm{{\pi}_s}_\infty$. Since $ \norm{{\pi}_s}_\infty \leq 1$, $g^\pi$ is $K(V)$-Lipschitz. \qed
\end{pf}
\end{document}